\newcommand{\bbR}{\ensuremath{\mathbb{R}}}
\newcommand{\bbN}{\ensuremath{\mathbb{N}}}
\newcommand{\bigO}[1]{\ensuremath{\mathcal{O}\left( #1 \right)}}
\newcommand{\mcN}{\ensuremath{\mathcal{N}}}
\newcommand{\satThresh}{\ensuremath{\mathcal{M}}}
\newcommand{\bfm}{\ensuremath{\mathbf{m}}}
\newcommand{\bfx}{\ensuremath{\mathbf{x}}}
\newcommand{\bfmu}{\ensuremath{\boldsymbol \mu}}
\newcommand{\until}[1]{\{1,\dots, #1\}}
\newcommand{\supscr}[2]{#1^{\textup{#2}}}
\newcommand{\seqdef}[2]{\{#1\}_{#2}}
\newcommand{\indicator}[1]{\ensuremath{\mathbf{1}\! \left( #1 \right)}}
\renewcommand{\Pr}[1]{\ensuremath{\mathrm{Pr} \left[ #1 \right]}}
\newcommand{\E}[1]{\ensuremath{\mathbb{E}\left[ #1 \right]}}
\DeclareMathOperator*{\argmax}{arg\,max}
\def \etal {\emph{et al.}}
\newcommand{\beq}{\begin{equation}}
\newcommand{\eeq}{\end{equation}}
\newcommand{\bal}{\begin{align}}
\newcommand{\eal}{\end{align}}
\newcommand{\red}[1]{{\color{red} #1}}
\newtheorem{theorem}{Theorem}
\newtheorem{lemma}[theorem]{Lemma}
\newtheorem{remark}[theorem]{Remark}
\newtheorem{corollary}[theorem]{Corollary}
\newtheorem{defn}{Definition}
\newcommand\bit[1]{\textit{\textbf{#1}}}
\title{\LARGE \bf
Satisficing in multi-armed bandit problems
\thanks{This research has been supported in part by ONR grant N00014-14-1-0635 and ARO grant W911NF-14-1-0431. P. Reverdy was supported through a NDSEG Fellowship. P. Reverdy is with the Department of Electrical and Systems Engineering, University of Pennsylvania, Philadelphia, PA 19104, USA {\tt \small preverdy@seas.upenn.edu}. V. Srivastava is with the Department of Electrical and Computer Engineering, Michigan State University, East Lansing, MI 48824, USA  {\tt\small vaibhav@egr.msu.edu}.
N. E. Leonard is with the Department of Mechanical and Aerospace Engineering, Princeton University, Princeton, NJ 08544, USA
        {\tt\small naomi@princeton.edu}}%
}
\author{Paul Reverdy, Vaibhav Srivastava, and Naomi Ehrich Leonard
}
\begin{document}

\maketitle

%%%%%%%%%%%%%%%%%%%%%%%%%%%%%%%%%%%%%%%%%%%%%%%%%%%%%%%%%%%%%%%%%%%%%%%%%%%%%%%%
\begin{abstract}
Satisficing is a relaxation of maximizing and allows for less risky decision making in the face of uncertainty. We propose two sets of satisficing objectives for the multi-armed bandit problem, where the objective is to achieve reward-based decision-making performance above a given threshold. We show that these new problems are equivalent to various standard multi-armed bandit problems with maximizing objectives and use the equivalence to find bounds on performance. The different objectives can result in qualitatively different behavior; for example, agents explore their options continually in one case and only a finite number of times in another. For the case of Gaussian rewards we show an additional equivalence between the two sets of satisficing objectives that allows algorithms developed for one set to be applied to the other. We then develop variants of the Upper Credible Limit (UCL) algorithm that solve the problems with satisficing objectives and show that these modified UCL algorithms achieve efficient satisficing performance.
\end{abstract}

%%%%%%%%%%%%%%%%%%%%%%%%%%%%%%%%%%%%%%%%%%%%%%%%%%%%%%%%%%%%%%%%%%%%%%%%%%%%%%%%
\section{Introduction}
Engineering solutions to decision-making problems are often designed to maximize an objective function. However, in many contexts maximization of an objective function is an unreasonable goal, either because the objective itself is poorly defined or because solving the resulting optimization problem is intractable or costly. In these contexts, it is valuable to consider alternative decision-making frameworks.

Herbert Simon considered alternative models of rational decision-making~\cite{HAS:55} with the goal of making them ``compatible with the access to information and the computational capacities that are actually possessed by organisms, including man, in the kinds of environments in which such organisms exist.'' A major feature of the models he considered is what he called ``satisficing''. In \cite{HAS:55}, he discussed in very broad terms a variety of simplifications to the classical economic concept of rationality, most importantly the idea that payoffs should be simple, defined by doing well relative to some threshold value. In \cite{HAS:56}, he introduced the word ``satisficing'', a combination of the words ``satisfy'' and ``suffice'', to refer to this thresholding concept and illustrated it using a mathematical model of foraging. He also briefly discussed how satisficing relates to problems in inventory control and more complicated decision processes like playing chess.

Since Simon's pioneering work, satisficing has been studied in many fields such as psychology~\cite{BS-etal:02}, economics~\cite{RB-ML:00}, management science~\cite{TMM:84,SGW:00}, and ecology~\cite{DW:92,YC-YBH:05}. In engineering, satisficing is of interest for the same reasons that motivated its introduction in the social science literature, specifically that it can simplify decision-making problems: as compared to maximizing it allows for less risky decision making in the face of uncertainty.
Furthermore, many engineering problems are naturally posed using a satisficing objective, such as choosing a design that meets given specifications, but where the designers may be indifferent among any such designs. Satisficing is well defined even if there are several competing performance measures that trade off in complicated ways, whereas maximizing may be poorly defined without additional information about preferences.

Satisficing has been studied in the engineering literature in several contexts. In~\cite{HN:84}, the authors studied design optimization using a satisficing objective and found that it is effective in many practical fields. In~\cite{MAG-WCS-RLF:98}, the authors studied control theory using a satisficing objective function, and in~\cite{BY-etal:13}, the authors used satisficing to study optimal software design. In~\cite{AC-etal:15}, the authors used a multi-armed bandit algorithm to construct robots that actively adapt their control policies to mitigate damage, such as actuator failures. In order to speed the convergence of their algorithm, they only sought to identify control policies with performance above a set threshold, rather than to identify an optimal policy. The theory that we develop in this paper formalizes their notion of thresholding and provides bounds on performance.

In this paper, we consider satisficing in the stochastic multi-armed bandit problem \cite{HR:52}, for which a decision maker sequentially chooses one of a set of alternative options, called arms, and earns a reward drawn from a stationary probability distribution associated with that arm. The standard multi-armed bandit problem uses a maximizing objective on accumulated reward. For this objective there is a known performance bound in terms of expected \emph{regret}, which is the expected difference between the reward received by the decision maker and the maximum reward possible.

Since the standard notion of regret is defined relative to the unknown optimum, it can only be computed by an omniscient agent; this notion of regret is not computable by a decision maker faced with a multi-armed bandit problem. Nevertheless, it is a useful theoretical concept, which facilitates the analysis of algorithms designed to solve bandit problems. We extend the notion of regret to satisficing objectives and use it to analyze new algorithms.

In contrast to the standard stochastic multi-armed bandit problem in which the agent seeks to determine, with certainty, the option with maximum mean reward, the  satisficing multi-armed bandit problem seeks to determine, with a desired confidence, a satisfying option. We characterize satisficing in multi-armed bandit problems using three separate features of the satisficing objective.

The first feature selects the quantity on which the satisficing objective is defined. 
We consider two such quantities: (i) the unknown mean reward of the selected option, and (ii) the instantaneous observed reward.

The second feature treats the satisfaction aspect of the satisficing problem. In particular, it selects if the objective function should be optimizing, or if it should be satisfying.

The third feature treats the sufficing aspect of the satisficing problem. In particular, it selects if the decision-making algorithm should be certain that the optimizing/satisfying criterion is met, or if it is sufficient for the algorithm to meet a desired threshold in confidence about the criterion.  Different combinations of the above three features of satisficing lead to eight satisficing objectives that we discuss in this paper.

We begin by defining the four objectives for the case where the satisficing quantity is the unknown mean reward. We show that the bandit problem with each of these four objectives is equivalent to a previously-studied bandit problem and use the equivalence to derive a performance bound for the satisficing problems. These four objectives seek an arm with satisfyingly high mean reward without regard to that reward's dispersion. To develop objectives with improved robustness properties, we then consider the case where the satisficing quantity is the instantaneous observed reward. We extend the first four objectives to this case by adding an additional layer of thresholding, which defines four more objectives. When the reward distributions belong to location-scale families, there is an equivalence between the objectives defined in terms of mean reward and the robust objectives defined in terms of instantaneous reward, which we prove \red{for} Gaussian rewards.

For simplicity of exposition, we then specialize to Gaussian multi-armed bandit problems, where the reward distributions are Gaussian with unknown mean and known variance. For such problems, we develop several modifications of the UCL algorithm that we developed in previous work \cite{PR-VS-NEL:14}. These algorithms solve the problem with the satisficing mean reward objectives (and thus also with the robust objectives); and we show that these algorithms achieve efficient performance. These results extend our previous work \cite{PR-NEL:14a} by incorporating the concept of sufficiency into the satisficing objective, as well as adding several new algorithms and their associated analysis.

The assumption of Gaussian rewards with known variance is not required, but it allows us to focus on the different notions of regret, which is the main contribution of this paper.  We later show how the known variance assumption can be relaxed.  Our methods also extend immediately to many other important classes of reward distributions, including distributions with bounded support and sub-Gaussian distributions. We show how to extend our methods in these cases and provide references to the relevant literature for other extensions.

The remainder of the paper is structured as follows. In Section II we review the standard stochastic multi-armed bandit problem and the associated performance bounds. In Section III we propose the satisficing objectives and bound performance in terms of these objectives. In Section IV we specialize to the case of Gaussian rewards and show the equivalence between the satisficing in mean reward objectives and the satisficing in instantaneous observed reward objectives. In Section V we review the UCL algorithm, and in Section VI we design modified versions of the UCL algorithm for the satisficing objectives. We show that these modified algorithms achieve efficient performance for Gaussian rewards.
We show the results of numerical simulations in Section VII  and in Section VIII we conclude.

\section{The stochastic multi-armed bandit problem}
In the stochastic multi-armed bandit problem a decision-making agent sequentially chooses one among a set of $N$ options called \emph{arms} in analogy with the lever of a slot machine. A single-levered slot machine is called a \emph{one-armed bandit}, so the case of $N \geq 2$ options is called a \emph{multi-armed bandit}.

The decision-making agent collects reward $r_t \in \bbR$ by choosing arm $i_t$ at each time $t \in \{1, \ldots, T\}$, where $T \in \bbN$ is the horizon length for the sequential decision process. The reward from option $i \in \{1, \ldots, N\}$ is sampled from a stationary probability distribution $\nu_i$ and has an unknown mean $m_i \in \bbR$. The decision-maker's objective is to maximize some function of the sequence of rewards $\{r_t\}$ by sequentially picking arms $i_t$ using only the information available at time $t$.

\subsection{Maximization objective}
In the standard multi-armed bandit problem, the agent's objective is to maximize the expected cumulative reward 
\beq \label{eq:maxObjective}
J = \E{\sum_{t=1}^T r_t} = \sum_{t=1}^T m_{i_t}.
\eeq
Equivalently, by defining $m_{i^*} = \max_i m_i$ and $R_t = m_{i^*} - m_{i_t}$,  expected \emph{regret} at time $t$, minimizing \eqref{eq:maxObjective} can be formulated as minimizing the cumulative expected regret defined by
\beq \label{eq:regretObjective}
\sum_{t=1}^T R_t = Tm_{i^*} - \sum_{i=1}^N m_i \E{n_i^T} = \sum_{i=1}^N \Delta_i \E{n_i^T},
\eeq
where $n_i^T$ is the number of times arm $i$ has been chosen up to time $T$, $\Delta_i = m_{i^*}-m_i$ is the expected regret due to picking arm $i$ instead of arm $i^*$, and the expectation is over the possible rewards and decisions made by the agent.

The interpretation of \eqref{eq:regretObjective} is that suboptimal arms $i \neq i^*$ should be chosen as rarely as possible. This is a non-trivial task since the mean rewards $m_i$ are initially unknown to the decision-maker, who must try arms to learn about their rewards while preferentially picking arms that appear more rewarding. The tension between these requirements is known as the \emph{explore-exploit} tradeoff and is common to many problems in machine learning and adaptive control.

\subsection{Bound on optimal performance}
Optimal performance in a bandit problem corresponds to picking suboptimal arms as rarely as possible, as shown by  \eqref{eq:regretObjective}. Lai and Robbins~\cite{TLL-HR:85} studied the standard stochastic multi-armed bandit problem and showed that any policy solving the problem must pick each suboptimal arm $i \neq i^*$ a number of times that is at least logarithmic in the time horizon $T$, i.e.,
\beq \label{eq:LaiRobbinsBound}
\E{n_i^T} \geq \left( \frac{1}{D(\nu_i || \nu_{i^*})} + o(1) \right) \log T,
\eeq
where $o(1) \to 0$ as $T \to +\infty$. The quantity $D(\nu_i || \nu_{i^*}) := \int \nu_i(r) \log \frac{\nu_i(r)}{\nu_{i^*}(r)} \mathrm{d}r$ is the Kullback-Leibler divergence between the reward density $\nu_i$ of any suboptimal arm and the reward density $\nu_{i^*}$ of the optimal arm. Equation \eqref{eq:LaiRobbinsBound} implies that cumulative expected regret must grow at least logarithmically in time.

The bound \eqref{eq:LaiRobbinsBound} is asymptotic in time, but researchers (e.g., \cite{PA-NCB-PF:02,AG-OC:11,PR-VS-NEL:14}) have developed algorithms that achieve cumulative expected regret that is bounded by a logarithmic term uniformly in time, sometimes with the same constant as in \eqref{eq:LaiRobbinsBound}. Cumulative expected regret that is uniformly bounded in time by a logarithmic term is often called \emph{logarithmic regret} for short. In the literature, algorithms that achieve logarithmic regret with a leading term that is within a constant factor of that in \eqref{eq:LaiRobbinsBound} are considered to have optimal performance.

\subsection{Multiple plays}
Anantharam~\etal~\cite{VA-PV-JW:87} studied a generalization of the multi-armed bandit problem in which the agent picks $k \geq 1$ arms at each time $t$, which they called the multi-armed bandit problem with multiple plays. The case $k= 1$ corresponds to the standard multi-armed bandit problem defined above.

In the spirit of \cite{VA-PV-JW:87}, let $\sigma$ be a permutation of $\{1, \ldots, N\}$ such that $m_{\sigma(1)} \geq m_{\sigma(2)} \geq \cdots m_{\sigma(N)}.$ For the multi-armed bandit problem with $k$ plays, the optimal policy with full information corresponds to picking the arms $\sigma(1), \cdots, \sigma(k)$, called the \emph{$k$-best} arms~\cite{VA-PV-JW:87}. In the case $k=1$, $\sigma(1) = i^*$, the optimal arm defined above. For the case of general $k \geq 1$, the cumulative expected regret for the multi-armed bandit problem with multiple plays is defined as follows~\cite{VA-PV-JW:87}:
\beq \label{eq:regretMultiplePlays}
T \sum_{i=1}^k m_{\sigma(i)} - \sum_{i=1}^N m_i \E{n_i^T},
\eeq
which is a straightforward generalization of the regret \eqref{eq:regretObjective}. The suboptimal arms $\sigma(k+1), \cdots, \sigma(N)$ are called the \emph{$k$-worst} arms~\cite{VA-PV-JW:87}. Define $\Delta^{(k)}_i = m_{\sigma(k)} - m_i$ for each $k$-worst arm $i$. The quantity $\Delta^{(k)}_i$ is the generalization of the expected regret $\Delta_i$ for the problem with multiple plays, where the expected value of the optimal policy is that of the $k$ best arms.

As in the case of a single play, optimal performance corresponds to picking suboptimal (i.e., $k$-worst) arms as rarely as possible. By~\cite{VA-PV-JW:87}  each $k$-worst arm $i$ must be picked a number of times that is at least logarithmic in the time horizon $T$, i.e.,
\beq \label{eq:regretBoundMultiplePlays}
\E{n_i^T} \geq \left( \frac{1}{D( \nu_i || \nu_{\sigma(k)} )} + o(1) \right) \log T.
\eeq
This bound can be interpreted as a generalization of the Lai-Robbins bound \eqref{eq:LaiRobbinsBound} where the Kullback-Leibler divergence is taken with respect to the $k^{th}$ best arm $\sigma(k)$ rather than the first best arm $\sigma(1)$ (i.e., $i^*$ in the case $k=1$).

\subsection{PAC bounds}
In the standard multi-armed bandit problem and the multi-armed bandit problem with multiple plays, regret is defined in terms of the unknown mean reward values $m_i$. These regret definitions imply that avoiding regret requires identifying optimal arms with certainty. The requirement to identify optimal arms with certainty is characteristic of a maximizing decision-making strategy. In contrast, a satisficing decision-making agent should seek arms that are ``good enough''. In this context, satisficing corresponds to finding arms that are optimal with high probability rather than with certainty.

The Probably Approximately Correct (PAC) model for learning introduced by Valiant \cite{LGV:84} provides a natural way to capture this aspect of satisficing. Even-Dar \etal~\cite{EED-SM-YM:02,EED-SM-YM:06} and Mannor and Tsitsiklis \cite{SM-JNT:04} studied the multi-armed bandit problem using the PAC model and defined an \emph{$\epsilon$-optimal arm} $i$ as one for which $m_i > m_{i^*} - \epsilon$, i.e., the mean reward is within $\epsilon$ of the optimum value. Equivalently, an $\epsilon$-optimal arm is an arm $i$ for which the expected regret $\Delta_i$ is at most $\epsilon$. Under the PAC model one wishes to find an $\epsilon$-optimal arm with probability of at least $1-\delta$. With probability one, this can be achieved in a finite number of samples, so performance guarantees take the form of bounds on the number of samples required, which is referred to as \emph{sample complexity}. In our notation, we denote sample complexity by $T^*$, as it is the value of the horizon length at which sampling terminates.

When the rewards are Bernoulli distributed with unknown success probabilities $p_i$, the following lower bound holds~\cite{SM-JNT:04}:
\beq \label{eq:explore1Bound}
\E{T^*} \geq \bigO{\frac{1}{\epsilon^2} \log (1/\delta)}.
\eeq
A similar result  was reported in~\cite{EED-SM-YM:02} for $T^*$, rather than its expected value. In other words, one must sample an arm at least $\log(1/\delta)/\epsilon^2$ times to be able to declare that it is $\epsilon$-optimal with probability at least $1-\delta$.

Similar to the work of~\cite{VA-PV-JW:87} extending Lai and Robbins' bounds \cite{TLL-HR:85} to the case of multiple plays, Kalyanakrishnan \etal~\cite{SK-AT-PA-PS:12} extended the work of \cite{EED-SM-YM:06} from finding the $\epsilon$-optimal arm to finding the $m$ $\epsilon$-\emph{best} arms with probability at least $1-\delta$. In~\cite{SK-AT-PA-PS:12} this problem is called Explore-$m$, and an algorithm that solves it $(\epsilon,m,\delta)$-optimal. Note that the problem in \cite{EED-SM-YM:06} is the special case Explore-1. The Explore-$m$ problem is studied in~\cite{SK-AT-PA-PS:12} for rewards that are Bernoulli distributed. It is proved that, for every $(\epsilon,m,\delta)$-optimal algorithm, there exists a bandit problem on which that algorithm has worst-case sample complexity of at least $\log(m/8\delta)$. Specifically, it is shown that there exists a bandit problem such that the number of samples $T^*$ required to identify $m$ $\epsilon$-best arms obeys
\beq \label{eq:exploremBound}
T^* \geq \frac{1}{18375} \frac{N}{\epsilon^2} \log \left( \frac{m}{8 \delta} \right).
\eeq
This gives a worst-case bound on the number of times all arms need to be sampled to achieve $(\epsilon, m, \delta)$-optimality.

The bounds \eqref{eq:explore1Bound} and \eqref{eq:exploremBound} were both formulated for the case of Bernoulli rewards, but it is straightforward to extend them to the case where the rewards are Gaussian distributed with unknown mean and known variance.

\subsection{Gaussian rewards}
In this paper we focus on the case of Gaussian reward distributions, where the distribution $\nu_i$ of rewards associated with arm $i$ is Gaussian with mean $m_i$, which is unknown to the decision maker, and variance $\sigma_{s,i}^2$, which is known to the decision maker from, e.g., previous observations or known measurement characteristics. Relaxation of the assumption of known variance is discussed  in Remark~\ref{remark-unknown-variance}.  For the given case, the Kullback-Leibler divergence in \eqref{eq:LaiRobbinsBound} takes the value
\beq \label{eq:LRBoundGaussian}
D(\nu_i || \nu_{i^*}) = \frac{1}{2} \left( \frac{\Delta_i^2}{\sigma_{s,i^*}^2} +  \frac{\sigma_{s,i}^2}{\sigma_{s,i^*}^2} - 1 - \log \frac{\sigma_{s,i}^2}{\sigma_{s,i^*}^2} \right).
\eeq

This equation is more easily interpreted when the reward variances are uniform, i.e., $\sigma_{s,i}^2 = \sigma_s^2$ for each $i$. In some cases we assume uniform variance for simplicity of exposition, but the relevant results are readily generalized to the case of non-uniform variance. Assuming uniform variance,  
$D(\nu_i || \nu_{i^*}) = {\Delta_i^2/2 \sigma_{s}^2}$,
so the bound \eqref{eq:LaiRobbinsBound} is
\beq \label{eq:LaiRobbinsGaussian}
\E{n_i^T} \geq \left( \frac{2 \sigma_s^2}{\Delta_i^2} + o(1) \right) \log T.
\eeq
This result can be interpreted as follows. For a given value of $\Delta_i$, a larger variance $\sigma_s^2$ makes the rewards more variable and therefore it is more difficult to distinguish between the arms. For a given value of $\sigma_s^2$, a larger value of $\Delta_i$ makes it easier to distinguish it from the optimal arm. The expressions for the problem with multiple plays (i.e., \eqref{eq:regretBoundMultiplePlays}) are identical except for substituting $\sigma(k)$ for $i^*$ and $\Delta^{(k)}_i$ for $\Delta_i$.

\section{The multi-armed bandit problem with satisficing objectives} \label{sec:satisficingObjectives}
We now define the multi-armed bandit problem with satisficing objectives. We propose several new satisficing notions of regret and find associated bounds on optimal performance. These notions capture two dimensions of the satisficing problem: \emph{satisfaction}, i.e., the agent's desire to obtain a reward that is above a certain threshold, and \emph{sufficiency}, i.e., the agent's desire to attain a level of confidence that its choice of a given arm will bring them satisfaction. We define these notions first for satisficing in mean reward and then extend them to satisficing in instantaneous reward, which we refer to as \emph{robust satisficing}. 
 
\subsection{Satisficing in mean reward}
\label{sec:thresholdMit}
We define \emph{satisfaction} in mean reward as having an expected reward $m_{i_t}$ that is above a specified threshold value $\satThresh$. Formally, we represent satisfaction in mean reward at time $t$ by the variable $s_t$, defined as
\beq \label{eq:satisfaction} s_t = 
\begin{cases} 1, & \text{if } m_{i_{t}} \geq \satThresh \\ 0, & \text{otherwise}. \end{cases} \eeq

The threshold $\satThresh$ is a free parameter that must be specified by the decision-making agent. Let $m_{i^*} = \max_i m_i$ be the maximum expected reward from any arm. The agent can never be satisfied if   $\satThresh$ is greater than $m_{i^*}$, so we assume that $\satThresh \leq m_{i^*}$ to make the problem feasible. If $\satThresh > m_{\sigma(2)}$, i.e., greater than the mean reward of the second-best arm, then arm $\sigma(1) = i^*$ is the only one that is satisfying in mean reward.

As in the multi-armed bandit problem with multiple plays, let $\sigma$ be a permutation of $\{1, \ldots, N\}$ such that $m_{\sigma(1)} \geq m_{\sigma(2)} \geq \cdots m_{\sigma(N)}$. Let $k$ be the largest integer such that $m_{\sigma(k)} \geq \satThresh$. The arms $\{ \sigma(1), \ldots, \sigma(k) \}$ are the $k$-best arms defined by the satisfaction threshold $\satThresh$. For each arm $i$, define the thresholded expected regret $\Delta_i^{\satThresh} = \max\{ \satThresh-m_i, 0 \}$.  For each $k$-best arm, the thresholded regret is zero, and for each $k$-worst arm $i \in \{ \sigma(k+1), \ldots, \sigma(N) \}$, the value $\Delta_i^{\satThresh} > 0$ quantifies the extent to which the arm is unsatisfying in mean rewards. Note that if $\satThresh = m_{i^*}$, $\Delta_i^{\satThresh} = \Delta_i$, which is the standard measure of expected regret. We refer to the $k$-best and $k$-worst arms as \emph{satisfying} and \emph{non-satisfying} arms, respectively.

The satisfaction variable $s_t$ defined in \eqref{eq:satisfaction} can be written as a function of the sign of $\Delta_{i_t}^{\satThresh}:$ 
\[ s_t = \begin{cases}
1, & \text{if } \Delta_{i_t}^{\satThresh} = 0,\\
0, & \text{otherwise.}
\end{cases}
\]
The quantity $s_t$ is deterministic. However, since the agent does not know the value of $\Delta_i^{\satThresh}$ associated with any given arm, they must learn it by sampling rewards from the various arms and updating their beliefs accordingly. 
Adopting a Bayesian framework, we assume $s_t$ is a realization of a binary random variable $S_t$. 
Due to the stochastic nature of the rewards the agent will have less than perfect confidence in their beliefs about the value of $s_t$.

We distinguish satisficing objectives in mean reward according to the degree $\delta \in [0,1]$ of confidence the agent \red{seeks} in their beliefs, which we call \emph{sufficiency in mean reward}. We define an arm $i$ to be \emph{($\delta$-)sufficing in mean reward} if
\[ \Pr{S_t = 1 } \geq 1-\delta, \]
where the probability is evaluated based on the agent's current beliefs. For non-zero values of $\delta$, the agent finds it sufficient to have finite confidence that they are satisfied, while for $\delta = 0$, the agent wants certainty that they are satisfied. The agent cannot achieve certainty in finite time, so these two cases result in qualitatively different behavior: $\delta = 0$ means the agent will never stop exploring, while $\delta > 0$ means the agent will settle on a set of acceptable options after finite time.

The \emph{satisficing-in-mean-reward objective} is 
\begin{equation}\label{eq:satisficing-in-mean-obj}
\sum_{t=1}^T \boldsymbol 1\big( (s_t =1) \textup{ or } \Pr{S_t =1} > 1 -\delta \big), 
\end{equation}
where $\boldsymbol 1(\cdot )$ is the indicator function. The objective~\eqref{eq:satisficing-in-mean-obj} is maximized if, at each time, a satisfying option is selected, or the probability that the option is satisfying is sufficiently high. The event that an option is satisfying is not known a priori and must be learned by exploration. This results in an explore-exploit tradeoff as in the standard multi-armed bandit problem.

To quantify the optimal explore-exploit tradeoff in the spirit of the Lai-Robbins bound we introduce the following notion of the \emph{expected satisficing regret} at time $t$, $R_t$, defined by
\beq \label{eq:satisficingRegret}
R_t = \begin{cases}
\Delta_{i_t}^{\satThresh}, & \text{if } \Pr{S_t = 1} \leq 1 - \delta,\\
0, & \text{otherwise}.
\end{cases}
\eeq
If the agent is insufficiently certain of being satisfied by the choice of $i_t$, they incur expected regret of $\Delta_{i_t}^{\satThresh}$. Otherwise, they incur no regret.

We define the satisficing-in-mean-reward multi-armed bandit problem in terms of minimizing cumulative expected satisficing regret.

\begin{defn}[Satisficing-in-mean-reward multi-armed bandit problem] \label{prob:satInMeanReward}
The \emph{satisficing-in-mean-reward multi-armed bandit problem} is to minimize the cumulative sum of the expected satisficing regret \eqref{eq:satisficingRegret}:
\beq \label{eq:cumSatisficingRegret}
J_R = \E{\sum_{t=1}^T R_t}.
\eeq
\end{defn}

The satisficing-in-mean-reward  bandit problem has two parameters: $\satThresh$ and $\delta$. These parameters characterize the agent's thresholds for satisfaction and sufficiency, respectively. For purposes of analysis we distinguish four cases as a function of the parameter values. For the satisfaction threshold $\satThresh \in \bbR$, the first case is setting $\satThresh > m_{\sigma(2)}$,  while the second case is setting $\satThresh \le m_{\sigma(2)}$. For the sufficiency threshold $\delta \in [0,1]$, the first case is the certainty value $\delta = 0$, while the second case is $\delta \in (0,1]$.

Table \ref{tab:thresholdm} summarizes the four problems that result from the interaction of the two dimensions of satisfaction and sufficiency. Problem 1 sets the satisfaction threshold $\satThresh > m_{\sigma(2)}$ and the sufficiency threshold $\delta = 0$, which results in a standard  bandit problem. We call Problem 2 with $\satThresh \le m_{\sigma(2)}$ and $\delta = 0$ \emph{satisfaction-in-mean-reward}. We call Problem 3 with $\satThresh > m_{\sigma(2)}$ and $\delta \in (0,1]$ \emph{$\delta$-sufficing}. Finally, we call Problem 4 with $\satThresh \le m_{\sigma(2)}$ and $\delta \in (0,1]$, \emph{$(\satThresh,\delta)$-satisficing}.

\begin{remark}We note that the distinction between Problems 1 and 2 and between Problems 3 and 4 is only due to the range of values $\satThresh$ can take. These problems can be thought of as a single problem in which the choice of $\satThresh$ dictates the cardinality of the set of satisfying arms. However, the two ranges of thresholds $\satThresh > m_{\sigma(2)}$ and $\satThresh \le m_{\sigma(2)}$ allow us to clearly contrast the satisficing problem with the standard problem. Assuming $\satThresh > m_{\sigma(2)}$ in Problems 1 and 3 is equivalent to assuming that the agent seeks the (unknown) highest mean reward, which is consistent with the standard problem.  The policies we define for Problems 1 and 3 do not rely on a known threshold $\satThresh$.     Assuming $\satThresh \leq m_{\sigma(2)}$ is equivalent to assuming that the agent seeks to meet a (known) desired mean reward threshold. The policies we define for Problems 2 and 4 do  rely on the  threshold $\satThresh$.    These same assumptions analogously distinguish Problems 5 and 7 from Problems 6 and 8 defined in Section~\ref{InstRewardProblems}.  However, unlike the policies for Problems 1 and 3, the policies defined for Problems 5 and 7 do rely on $\satThresh > m_{\sigma(2)}$ being known.
We do not assume in any of the problems that the agent  knows the permutation $\sigma$, so no policies depend on $\sigma$.
\end{remark}

We develop performance bounds for each of these problems in terms of corollaries of the performance bounds presented in Section II. For the problems with $\delta = 0$, these bounds show that cumulative expected regret must grow at least at a logarithmic rate, while for the problems with $\delta > 0$, finite regret is possible.

\textbf{Problem 1: Standard bandit} The satisficing-in-mean-reward multi-armed bandit problem with $\satThresh > m_{\sigma(2)}$ and $\delta = 0$ is a standard multi-armed bandit problem. Therefore, for this problem, the Lai-Robbins bound \eqref{eq:LaiRobbinsBound} holds, and the expected number of times a suboptimal arm $i$ is chosen obeys
\[ \E{n_i^T} \geq \left( \frac{1}{D(\nu_i || \nu_{i^*})} + o(1) \right) \log T. \]
As a direct consequence, the cumulative expected satisficing regret \eqref{eq:cumSatisficingRegret} grows at least logarithmically with time horizon $T$:
\[ J_R \geq \left( \sum_{i=1}^N \frac{\Delta_i^\satThresh}{D(\nu_i || \nu_{i^*})} + o(1) \right) \log T. \]

\textbf{Problem 2: Satisfaction-in-mean-reward} The satisfaction-in-mean-reward problem, defined as the satisficing-in-mean-reward multi-armed bandit problem where $\satThresh \le m_{\sigma(2)}$ and $\delta = 0$, also has a logarithmic lower bound on the cumulative expected satisficing regret:
\begin{corollary}[Satisfaction-in-mean-reward regret bound]
The satisfaction-in-mean-reward problem is a satisficing-in-mean-reward multi-armed bandit problem where the objective \eqref{eq:cumSatisficingRegret} is defined with $\satThresh \le m_{\sigma(2)}$ and $\delta = 0$. Any policy solving the satisfaction-in-mean-reward problem obeys
\beq \label{eq:satisficingInTheMeanBound}
\E{n_i^T} \geq \left( \frac{1}{D(\nu_i || \nu_{\sigma(k)})} + o(1) \right) \log T
\eeq
for each non-satisfying arm $i$, where $\sigma$ is a permutation of $\{1, \ldots, N \}$ such that $m_{\sigma(1)} \geq m_{\sigma(2)} \geq \cdots \geq m_{\sigma(N)}$ and $k$ is the largest integer such that $m_{\sigma(k)} \geq \satThresh$.
\end{corollary}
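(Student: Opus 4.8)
The plan is to recognize the satisfaction-in-mean-reward problem as an instance of the multi-armed bandit problem with multiple plays of Anantharam \etal~\cite{VA-PV-JW:87}, with the satisfying arms $\{\sigma(1),\dots,\sigma(k)\}$ playing the role of the $k$ best arms, and then read off the claimed bound from \eqref{eq:regretBoundMultiplePlays}. First I would simplify the objective: since $\delta=0$ the comparison $\Pr{S_t=1}\le 1-\delta$ holds at every step, so by \eqref{eq:satisficingRegret} the per-step satisficing regret is $R_t=\Delta_{i_t}^{\satThresh}$ for all $t$, and it vanishes exactly on the satisfying arms. Exactly as in the passage from \eqref{eq:maxObjective} to \eqref{eq:regretObjective}, this yields
\[
J_R = \sum_{i=1}^N \Delta_i^{\satThresh}\,\E{n_i^T} = \sum_{i\,:\,m_i<\satThresh}(\satThresh-m_i)\,\E{n_i^T},
\]
so solving the problem amounts to sampling the non-satisfying ($k$-worst) arms as rarely as possible, which is the defining structure of the multiple-plays regret \eqref{eq:regretMultiplePlays}.

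I would then transfer the Lai--Robbins-type lower bound. Fix a non-satisfying arm $i$ and a policy that solves the problem, i.e.\ achieves sub-linear (hence logarithmic) cumulative satisficing regret on every instance. Following the change-of-measure argument behind \eqref{eq:LaiRobbinsBound} and \eqref{eq:regretBoundMultiplePlays}, I would build a modified bandit instance in which the reward law $\nu_i$ is replaced by a law $\tilde\nu_i$ of mean slightly above $m_{\sigma(k)}$, so that arm $i$ becomes a $k$-best arm there; in the multiple-plays problem such an instance forces a solving policy to select arm $i$ a number of times that is $T-o(T)$ (this forcing step is the delicate point, addressed below). The standard divergence decomposition then gives $\E{n_i^T}\ge (1+o(1))\log T / D(\nu_i\|\tilde\nu_i)$ on the original instance, and letting the mean of $\tilde\nu_i$ decrease to $m_{\sigma(k)}$ sends $D(\nu_i\|\tilde\nu_i)\to D(\nu_i\|\nu_{\sigma(k)})$ --- for the Gaussian rewards treated in this paper this limit is immediate from \eqref{eq:LRBoundGaussian} --- which is exactly the asserted bound, namely \eqref{eq:regretBoundMultiplePlays} with reference arm $\sigma(k)$.

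The main obstacle is establishing the forcing property rigorously, because the satisficing objective is weaker than the multiple-plays objective: a policy facing $k$ plays must pick exactly $k$ arms and so cannot avoid a perturbed arm once it enters the top $k$, whereas the satisficing objective penalizes only the selection of non-satisfying arms, so after perturbing $\nu_i$ alone a solving policy could keep choosing a different satisfying arm and incur no extra regret. Handling this requires perturbing a minimal set of the originally-satisfying arms simultaneously --- enough to make arm $i$ the one arm a solving policy is compelled to sample order-$T$ times on the modified instance --- while keeping the accumulated Kullback--Leibler cost concentrated on the single term $D(\nu_i\|\nu_{\sigma(k)})$, and verifying that ``solving the problem'' in the sense of Definition~\ref{prob:satInMeanReward} is precisely the notion of a uniformly good policy for which such converse arguments apply. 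Once this is in place, the divergence decomposition and the limit over $\tilde\nu_i$ are identical to those in \cite{TLL-HR:85,VA-PV-JW:87}, and the corollary follows by specializing $\Delta^{(k)}_i$ and $\nu_{\sigma(k)}$ to the satisficing setting.
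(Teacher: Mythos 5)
Your opening paragraph is, in substance, the paper's entire proof: the paper simply declares the satisfaction-in-mean-reward problem equivalent to the multiple-plays problem of \cite{VA-PV-JW:87} (the threshold $\satThresh$ induces the set of $k$-best arms) and reads off \eqref{eq:regretBoundMultiplePlays}. Had you stopped there, you would have matched the paper's argument exactly. Where you diverge is in trying to substantiate that equivalence with the underlying change-of-measure argument, and in doing so you have put your finger on a real obstruction that the paper's one-line proof does not address: perturbing $\nu_i$ alone so that arm $i$ clears the threshold does not force a solving policy to sample arm $i$ order-$T$ times, because the originally satisfying arms remain satisfying in the perturbed instance and the policy may keep playing one of them at no cost. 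In the multiple-plays problem the policy has no such escape, since it must output $k$ distinct arms every round; that is precisely the structural feature the lower bound of \cite{VA-PV-JW:87} exploits, and it is absent here.

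The repair you sketch --- simultaneously perturbing a minimal set of the originally satisfying arms --- cannot close the gap. To compel the policy to play arm $i$ on the modified instance you must render every originally satisfying arm non-satisfying, and a solving policy devotes $T - o(T)$ of its pulls to those arms on the original instance, so their collective Kullback--Leibler contribution to the divergence decomposition is $\Theta(T)$ and swamps the $\log T$ budget; the cost cannot be kept ``concentrated on the single term $D(\nu_i\|\nu_{\sigma(k)})$.'' More seriously, the forcing property you flag as delicate appears to genuinely fail rather than merely resist proof: because $\satThresh$ is known, a policy can certify that an arm is satisfying using only samples of that arm (which are free), e.g., by committing to any arm whose suitably calibrated lower confidence bound exceeds $\satThresh$ and playing an optimistic rule otherwise. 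Such a policy is uniformly good, yet on an instance where $m_{\sigma(1)}$ is well separated above $\satThresh$ it samples each non-satisfying arm only $O(1)$ times in expectation, in conflict with \eqref{eq:satisficingInTheMeanBound}. So the corollary cannot be reached by the route you (and the paper) propose: the exploration actually forced by the satisfaction objective is governed by gaps to the known threshold $\satThresh$, not by $D(\nu_i\|\nu_{\sigma(k)})$. Your instinct to isolate the forcing step as the weak point of the reduction was exactly right; the gap you identified is shared by the paper's own proof.
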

\begin{proof}
The definition of satisfaction \eqref{eq:satisfaction} implies that performance bounds for the satisfaction-in-mean-reward problem and the multi-armed bandit problem with multiple plays are equivalent. Given a problem instance, the threshold $\satThresh$ induces the number $k$ of satisfying arms, so performance can be analyzed as in the problem with multiple plays. The bound \eqref{eq:regretBoundMultiplePlays} applies to the problem with multiple plays and the equivalence implies the result.
\end{proof}

\textbf{Problem 3: $\boldsymbol\delta$-sufficing} The $\delta$-sufficing problem, defined as the satisficing-in-mean-reward multi-armed bandit problem where $\satThresh > m_{\sigma(2)}$ and $\delta \in (0,1]$, admits policies that achieve cumulative expected regret that is a bounded function of $T$:
\begin{corollary}[$\delta$-sufficing regret bound] \label{cor:deltaSufficingRegretBound}
The $\delta$-sufficing problem is a satisficing-in-mean-reward multi-armed bandit problem where the objective \eqref{eq:cumSatisficingRegret} is defined with $\satThresh > m_{\sigma(2)}$ and $\delta \in (0,1]$. Any policy solving the $\delta$-sufficing problem obeys
\beq \label{eq:deltaSufficingRegretBound}
n_i^T \geq \bigO{\frac{1}{\epsilon^2} \log(1/\delta)}
\eeq
for each suboptimal arm $i$, where $\epsilon = \Delta_i = \satThresh-m_i$.
\end{corollary}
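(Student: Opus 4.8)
The approach mirrors the proof of the Satisfaction-in-mean-reward bound: exhibit an equivalence between the $\delta$-sufficing problem and a previously studied bandit formulation, and then transfer the known lower bound. Here the relevant formulation is the PAC (Explore-$1$) problem of Even-Dar \etal\ and Mannor and Tsitsiklis, and the bound to be transferred is \eqref{eq:explore1Bound}, which (as remarked after \eqref{eq:exploremBound}) extends from Bernoulli to Gaussian rewards; the paper's own per-arm reading of \eqref{eq:explore1Bound} — that an arm must be sampled at least $\log(1/\delta)/\epsilon^2$ times — is exactly the form needed.

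First I would establish the equivalence. Because $\satThresh > m_{\sigma(2)}$, the set of satisfying arms is the singleton $\{i^*\}$. Any policy attaining a finite value of the cumulative expected satisficing regret \eqref{eq:cumSatisficingRegret} must, after some finite (random) time $T^*$, select at each subsequent step an arm $i_t$ with $\Pr{S_t=1} > 1-\delta$, since otherwise it incurs regret $\Delta_{i_t}^{\satThresh} > 0$ infinitely often. As the posteriors of the non-best arms cannot remain above $1-\delta$ on $\{m_i \ge \satThresh\}$ once enough data has been gathered, the arm played after $T^*$ must be $i^*$, and the policy must be $(1-\delta)$-confident that $m_{i^*} \ge \satThresh$. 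Since the policy does not use a known value of $\satThresh \in (m_{\sigma(2)}, m_{i^*}]$, guaranteeing this for the worst-case $\satThresh$ is equivalent to being $(1-\delta)$-confident that $i_t$ is the arm of maximal mean reward; that is, the $\delta$-sufficing problem is an instance of PAC best-arm identification at confidence $1-\delta$, with $T^*$ playing the role of the sample complexity and $n_i^{T^*}$ the number of pulls spent ruling suboptimal arm $i$ out. Applying the PAC lower bound \eqref{eq:explore1Bound} arm by arm with $\epsilon = \Delta_i = \satThresh - m_i$ then yields $n_i^T \ge \bigO{\epsilon^{-2}\log(1/\delta)}$ for each suboptimal arm, which is the claim; as in \eqref{eq:explore1Bound}, this is understood as a worst-case statement over problem instances.

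The main obstacle is reconciling the Bayesian confidence condition $\Pr{S_t=1} > 1-\delta$ that appears in the definition \eqref{eq:satisficingRegret} of the satisficing regret with the frequentist $(\epsilon,\delta)$-correctness underlying \eqref{eq:explore1Bound}. Concretely, one must (i) exclude the degenerate case in which the prior itself already supplies the required confidence, which is legitimate because the bound is worst-case over problem instances (and, if one wishes, over priors with full support); (ii) argue that driving the posterior mass on $\{m_i \ge \satThresh\}$ below $\delta$ requires $\Omega(\epsilon^{-2}\log(1/\delta))$ pulls of arm $i$, i.e., the same information-theoretic barrier as in the non-Bayesian analysis; and (iii) check that the Bernoulli-to-Gaussian change of reward family and the passage from the aggregate quantity $T^*$ in \eqref{eq:explore1Bound} to the per-arm counts $n_i^T$ leave the lower bound intact. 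Steps (ii)--(iii) are routine given the references; step (i) is where care is needed to state the result cleanly.
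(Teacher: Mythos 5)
Your proposal takes essentially the same route as the paper: reduce the $\delta$-sufficing problem to PAC best-arm identification (Explore-$1$) by noting that with $\satThresh > m_{\sigma(2)}$ the only satisfying arm is $i^*$, so avoiding regret requires $(\epsilon=0,\delta)$-optimality, and then transfer the lower bound \eqref{eq:explore1Bound}. The paper's own proof is just a two-sentence version of this reduction; your additional care about the Bayesian-versus-frequentist confidence condition and the per-arm reading of \eqref{eq:explore1Bound} elaborates on points the paper leaves implicit but does not change the argument.
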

\begin{proof}
The definition of satisfaction \eqref{eq:satisfaction} in the $\delta$-sufficing problem implies that the agent incurs regret if the arm selected is not $(\epsilon=0, \delta)$-optimal.  The bound \eqref{eq:explore1Bound} thus provides a lower bound on the number of times the agent must incur regret.
\end{proof}

\textbf{Problem 4: $\boldsymbol{(\satThresh,\delta)}$-satisficing} The $(\satThresh,\delta)$-satisficing problem, defined as the satisficing-in-mean-reward multi-armed bandit problem where $\satThresh \le m_{\sigma(2)}$ and $\delta \in (0,1]$, admits policies that achieve cumulative expected regret that is a bounded function of $T$:
\begin{corollary}[$(\satThresh,\delta)$-satisficing regret bound] \label{cor:MdeltaSufficingRegretBound}
The $(\satThresh,\delta)$-satisficing problem is a satisficing-in-mean-reward multi-armed bandit problem where the objective \eqref{eq:cumSatisficingRegret} is defined with $\satThresh \le  m_{\sigma(2)}$ and $\delta \in (0,1]$. Any policy solving the $(\satThresh,\delta)$-satisficing multi-armed bandit problem obeys
\beq \label{eq:MdeltaSufficingRegretBound}
T^* = \sum_{i=1}^N n_i^{T^*} \geq \frac{1}{18375} \frac{N}{\epsilon^2} \log \left( \frac{k}{8\delta} \right)
\eeq
where $\sigma$ is a permutation of $\{1, \ldots, N \}$ such that $m_{\sigma(1)} \geq m_{\sigma(2)} \geq \cdots \geq m_{\sigma(N)}$, $k$ is the largest integer such that $m_{\sigma(k)} \geq \satThresh$, and $\epsilon = \satThresh-m_{\sigma(k)}$. Since only arms in $\{\sigma(k+1), \ldots, \sigma(N)\}$ result in regret, the left hand side of \eqref{eq:MdeltaSufficingRegretBound} is an upper bound on the expected satisficing regret \eqref{eq:cumSatisficingRegret}.
\end{corollary}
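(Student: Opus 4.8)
The plan is to follow the same template as the proof of Corollary~\ref{cor:deltaSufficingRegretBound}, but now reducing the $(\satThresh,\delta)$-satisficing problem to the Explore-$m$ problem of Kalyanakrishnan \etal~\cite{SK-AT-PA-PS:12} rather than to Explore-1, and then invoking the lower bound \eqref{eq:exploremBound}. The key structural observation is that, for a given problem instance, the satisfaction threshold $\satThresh \le m_{\sigma(2)}$ together with the ordering $\sigma$ determines the integer $k$ and partitions the arms into the $k$ satisfying arms $\{\sigma(1),\dots,\sigma(k)\}$ and the non-satisfying arms $\{\sigma(k+1),\dots,\sigma(N)\}$, playing exactly the role that a choice of $m=k$ plays in Explore-$m$, while the sufficiency parameter $\delta$ plays the role of the PAC confidence parameter. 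A policy that solves the $(\satThresh,\delta)$-satisficing problem must therefore, after some finite number of samples $T^*$, commit to selecting arms that it believes to be satisfying in mean reward with probability at least $1-\delta$; by the definition of satisfaction \eqref{eq:satisfaction}, this is precisely the requirement of an $(\epsilon,k,\delta)$-optimal Explore-$m$ algorithm, with $\epsilon$ the gap separating $\satThresh$ from the worst satisfying arm $m_{\sigma(k)}$.

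Concretely, I would proceed in three steps. First, make the equivalence precise: show that finite cumulative satisficing regret \eqref{eq:cumSatisficingRegret} forces the policy to stop sampling arms for which $\Pr{S_t=1}\le 1-\delta$ after finite time, and that the confidence guarantee on the arms it does select coincides with $(\epsilon,k,\delta)$-optimality for Explore-$m$ with $m=k$. Second, apply \eqref{eq:exploremBound} verbatim with $m\mapsto k$ to obtain $T^* = \sum_{i=1}^N n_i^{T^*} \ge \tfrac{1}{18375}\tfrac{N}{\epsilon^2}\log(k/8\delta)$. Third, observe that by \eqref{eq:satisficingRegret} only the non-satisfying arms $\{\sigma(k+1),\dots,\sigma(N)\}$ incur satisficing regret, so the number of regret-incurring rounds up to $T^*$ is at most $\sum_{i=1}^N n_i^{T^*} = T^*$; hence the left-hand side of \eqref{eq:MdeltaSufficingRegretBound} upper-bounds the cumulative expected satisficing regret \eqref{eq:cumSatisficingRegret}, which is the final assertion of the statement.

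I expect the reduction in the first step to be the main obstacle; the subsequent invocation of \eqref{eq:exploremBound} is essentially immediate. The delicate point is to argue that the event ``the selected arm is $\delta$-sufficing in mean reward'' coincides with (or is sandwiched between events equivalent to) the event ``the selected arm lies among the $k$ $\epsilon$-best arms identified with confidence $1-\delta$'', and in particular to pin down the correct gap $\epsilon$, namely the distance between $\satThresh$ and $m_{\sigma(k)}$ appearing in the statement, so that the $\epsilon^{-2}$ scaling transfers correctly. A secondary point is that \eqref{eq:exploremBound} was established in~\cite{SK-AT-PA-PS:12} for Bernoulli rewards; as already noted for \eqref{eq:explore1Bound}--\eqref{eq:exploremBound}, the same hard-instance construction carries over to Gaussian rewards with known variance, the case of interest here, so no additional work is required there.
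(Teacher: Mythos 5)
Your proposal follows essentially the same route as the paper's own proof, which simply observes that a policy minimizing $(\satThresh,\delta)$-satisficing regret is equivalent to an $(\epsilon, k, \delta)$-optimal Explore-$m$ algorithm in the sense of \cite{SK-AT-PA-PS:12} with $m = k$ and $\epsilon$ the gap between $\satThresh$ and $m_{\sigma(k)}$, and then invokes \eqref{eq:exploremBound} directly. Your version is more detailed than the paper's one-line argument (and correctly flags the Bernoulli-to-Gaussian transfer and the identification of $\epsilon$, on which the paper is silent and even inconsistent in sign between statement and proof), but the underlying reduction is identical.
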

\begin{proof}
The definition of satisfaction \eqref{eq:satisfaction} in the $(\satThresh,\delta)$-sufficing problem implies that an algorithm that minimizes satisficing regret is equivalent to an $(\epsilon = m_{\sigma(k)}-\satThresh, k, \delta)$-optimal algorithm in the sense of \cite{SK-AT-PA-PS:12}. Therefore, the bound \eqref{eq:exploremBound} applies to the $(\satThresh,\delta)$-sufficing problem.
\end{proof}

Recall that $T^*$ is the number of times all arms (including the optimal one) should by cumulatively sampled such that following $T^*$ an $(\satThresh, \epsilon)$-optimal decision can be made. The lower bounds on both $T^*$ and $n_i^{T^*}$ are independent of $T$, suggesting that for $(\satThresh, \epsilon)$-satisficing, a bounded regret can be achieved.

Corollaries \ref{cor:deltaSufficingRegretBound} and \ref{cor:MdeltaSufficingRegretBound} show that the worst-case regret is a bounded function of $T$ for the sufficing problems, where $\delta > 0$. Therefore we can conclude that the expected regret for such problems can also be a bounded function of $T$. This is an important distinction from the maximizing problems, where $\delta = 0$: in such problems, the Lai-Robbins bound \eqref{eq:LaiRobbinsBound} implies that the expected regret must grow logarithmically with $T$. As is standard in the bandit literature, we say an algorithm has efficient performance if its regret matches, up to constant factors, the relevant growth rates: $\log T$ for maximizing problems and $\log(k/\delta)/\epsilon^2$ for sufficing problems.

\subsection{Robust satisficing in instantaneous reward}
\label{InstRewardProblems}
The four objectives defined in Section \ref{sec:thresholdMit} above define satisfaction \eqref{eq:satisfaction} in terms of the mean reward $m_i$ from an arm $i$. This captures situations where the time scale for satisfaction spans numerous decision times. For example, consider foraging, where an animal must consume a minimum amount of food each day. If each decision time represents a small portion of the day, the total food consumed during the day represents the sum of numerous small rewards from each decision time. As long as the mean reward at each decision time is sufficiently high, the animal will meet its daily food requirement.

If, instead, the decision time scale is the same as the satisfaction time scale, it is more appropriate to define satisfaction at time $t$ in terms of the reward $r_t$ received at that time. This requires more robust algorithms, in the sense that they must ensure that each reward, rather than simply the mean reward, is satisfying with high probability. In this context we define satisfaction in two stages. First, we define \emph{happiness} as receiving a reward $r_t$ that is at least a threshold value $M \in \bbR$. We represent happiness at time $t$ as the Bernoulli random variable $h_t$, defined as
\beq \label{eq:happiness}
h_t = \begin{cases}
1, & \text{if } r_t \geq M\\
0, & \text{otherwise}.
\end{cases}
\eeq
We define the \emph{success probability} of the happiness random variable $h_t$ as
\beq \label{eq:happinessProbability}
p_i = \Pr{h_t = 1 | i_t = i}.
\eeq
The success probability $p_i$ is the expected rate of happiness due to picking arm $i$. This defines a Bernoulli multi-armed bandit problem where the mean reward (i.e., happiness rate) is $p_i$. We then define satisfaction in terms of a threshold $\Pi$ for this Bernoulli multi-armed bandit problem as we did in \eqref{eq:satisfaction}:
\beq \label{eq:satisfactionR} 
s_t = \begin{cases}
1, & \text{if } p_{i_t} \geq \Pi\\
0, & \text{otherwise}.
\end{cases} 
\eeq

Given the happiness threshold $M$, this definition is identical to the definition \eqref{eq:satisfaction} of satisfaction where $m_i = p_i$, $p_{i^*} = \max_i p_i$, and $\satThresh = \Pi$. Therefore the four satisficing multi-armed bandit problems defined in Table \ref{tab:thresholdm} can be used to define four additional problems in this context, which we call \emph{robust satisficing}. 

\begin{defn}[Robust satisficing multi-armed bandit problem]
The \emph{robust satisficing multi-armed bandit problem} is to minimize the cumulative sum of the expected satisficing regret \eqref{eq:satisficingRegret}:
\[ J_R = \E{\sum_{t=1}^T R_t}, \]
where the regret $R_t$ is defined using the notion of satisfaction defined by \eqref{eq:happiness}--\eqref{eq:satisfactionR}.
\end{defn}

A robust satisficing multi-armed bandit problem has three parameters: $M, \Pi,$ and $\delta$. We assume that $M$ and $\Pi$ are chosen such that there is at least one satisfying arm; otherwise, the expected regret must grow indefinitely. Table \ref{tab:thresholdr} summarizes the four robust satisficing multi-armed bandit problems that result from the interaction of the two dimensions of satisfaction and sufficiency, which we list below. We assume that $\varsigma$ is a permutation of $\until{N}$ such that $p_{\varsigma(1)} \ge p_{\varsigma(2)} \ge \ldots p_{\varsigma(N)}$. 

\textbf{Problem 5: Robust bandit} The robust bandit problem is defined as the robust satisficing multi-armed bandit problem where $\Pi > p_{\varsigma(2)}$ and $\delta = 0$.

\textbf{Problem 6: Robust satisfaction} The robust satisfaction problem is defined as the robust satisficing multi-armed bandit problem where $\Pi \le p_{\varsigma(2)}$ and $\delta = 0$.

\textbf{Problem 7: $\delta$-robust sufficing} The $\delta$-robust sufficing problem is defined as the robust satisficing multi-armed bandit problem where $\Pi > p_{\varsigma(2)}$ and $\delta \in (0,1]$.

\textbf{Problem 8: $(\Pi,\delta)$-robust satisficing} The $(\Pi,\delta)$-robust satisficing problem is defined as the robust satisficing multi-armed bandit problem where $\Pi \le p_{\varsigma(2)}$ and $\delta \in (0,1]$.

For a large class of reward distributions, there is an equivalence between Problems 5--8 defined in terms of $r_t$ and Problems 1--4 defined in terms of $m_i$. By Lemma \ref{lem:equivalence} below, when the rewards $r_t$ follow a Gaussian distribution with unknown mean $m_i$ and known variance $\sigma_{s,i}^2$, each problem in Table \ref{tab:thresholdr} is equivalent to the analogous problem in Table \ref{tab:thresholdm}.

\begin{table}
\centering
\begin{tabular}{c | c | c |}
 Threshold level & Seek certainty ($\delta = 0$) & Suffice ($\delta > 0$)\\
 \hline
 $\satThresh > m_{\sigma(2)}$ & 1) Standard bandit  & 3) $\delta$-sufficing\\
 \hline
 $\satThresh \le m_{\sigma(2)}$ & 2) Satisfaction-in-mean-rwd & 4) $(\satThresh,\delta)$-satisficing\\
 \hline
\end{tabular}
\vspace{0.1in}
\caption{Table of the four different regret concepts, and resulting problems, associated with the satisficing-in-mean-reward multi-armed bandit problem.\label{tab:thresholdm}}
\end{table}

\begin{table}
\centering
\begin{tabular}{c | c | c |}
Threshold level & Seek certainty ($\delta = 0$) & Suffice ($\delta > 0$)\\
 \hline
 $\Pi > p_{\varsigma(2)}$ & 5) Robust bandit  & 7) $\delta$-robust sufficing\\
 \hline
 $\Pi \le  p_{\varsigma(2)}$  & 6) Robust satisfaction & $8) (\Pi,\delta)$-robust satisficing\\
 \hline
\end{tabular}
\vspace{0.1in}
\caption{Table of the four different regret concepts, and resulting problems, associated with the robust satisficing multi-armed bandit problem. The quantity $p_i$ represents the probability of happiness (i.e., receiving a reward of at least $M$) due to choosing arm $i$.\label{tab:thresholdr}}
\vspace{-9mm}
\end{table}

\section{Satisficing with Gaussian rewards}\label{sec:satisficingGaussian}
In this section we study the Gaussian satisficing multi-armed bandit problem. This is the satisficing multi-armed bandit problem where the reward $r_t$ due to selecting arm $i_t$ is $r_t \sim \mcN(m_{i_t},\sigma_{s,i_t}^2)$ and $\sigma_{s,i_t}^2$ is the known variance of arm $i_t$. In this case, we show a formal equivalence between the satisficing-in-mean-reward multi-armed bandit problems and the robust satisficing multi-armed bandit problems. The choice of Gaussian rewards facilitates modeling correlation dependencies among arms, which can be useful in applications.

\subsection{Equivalence lemma for Gaussian rewards}
For the Gaussian robust satisficing multi-armed bandit problem, define the quantity
\beq \label{eq:defX}
x_i = \frac{m_i - M}{\sigma_{s,i}},
\eeq
which we call the \emph{standardized mean reward}, for each arm $i$. The following lemma states that each Gaussian robust satisficing multi-armed bandit problem where satisfaction is defined by \eqref{eq:satisfactionR} is equivalent to a Gaussian satisficing-in-mean-reward multi-armed bandit problem where satisfaction is defined by \eqref{eq:satisfaction} with standardized reward distributions.

\begin{lemma}[Equivalence for Gaussian rewards]\label{lem:equivalence}
Each Gaussian robust satisficing multi-armed bandit problem is equivalent to a Gaussian satisficing-in-mean-reward multi-armed bandit problem with rewards $\tilde{r}_t \sim \mcN(x_{i_t},1)$ with $x_i$ given by \eqref{eq:defX}. That is, the ordering of the arms in terms of $x_i$ is identical to the ordering in terms of $p_i$, and, in particular, the arm with maximal $x_i$ is the arm with maximal $p_i$.
\end{lemma}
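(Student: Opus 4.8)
The plan is to reduce the Gaussian robust satisficing problem to the Gaussian satisficing-in-mean-reward problem through the single observation that, for a Gaussian reward $r_t \sim \mcN(m_i,\sigma_{s,i}^2)$, the happiness probability $p_i = \Pr{r_t \geq M \mid i_t = i}$ is an explicit, strictly increasing function of the standardized mean reward $x_i$ of \eqref{eq:defX}. First I would standardize: since $(r_t - m_i)/\sigma_{s,i} \sim \mcN(0,1)$, we get $p_i = \Pr{(r_t - m_i)/\sigma_{s,i} \geq -x_i} = \Phi(x_i)$, where $\Phi$ is the standard normal CDF and I use its symmetry $\Phi(-z) = 1-\Phi(z)$. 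Because $\Phi$ is a continuous, strictly increasing bijection of $\bbR$ onto $(0,1)$, the map $i \mapsto p_i$ preserves the order induced by $i \mapsto x_i$; hence the permutation $\varsigma$ ordering the arms by $p_i$ is also a valid ordering by $x_i$, and $\argmax_i p_i = \argmax_i x_i$, which is the second assertion of the lemma.

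Next I would translate the satisfaction threshold and the sufficiency structure. Monotonicity and invertibility of $\Phi$ give $p_i \geq \Pi \iff x_i \geq \Phi^{-1}(\Pi)$ (well-defined since there is at least one satisfying arm, so $\Pi \in (0,1)$), so setting $\satThresh := \Phi^{-1}(\Pi)$ makes arm $i$ satisfying in the robust sense \eqref{eq:satisfactionR} precisely when it is satisfying in mean reward \eqref{eq:satisfaction} in the standardized problem with rewards $\tilde r_t \sim \mcN(x_{i_t},1)$. The two problems therefore have the same satisfying set and the same index $k$; moreover $\Pi$ lying above or below $p_{\varsigma(2)}$ corresponds under $\Phi^{-1}$ to $\satThresh$ lying above or below $x_{\sigma(2)}$, so Problems 5--8 map onto Problems 1--4 case by case. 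To see that the reduction respects the agent's information and hence its beliefs, note that $r \mapsto \tilde r := (r-M)/\sigma_{s,i_t}$ is a measurable bijection pushing $\mcN(m_{i_t},\sigma_{s,i_t}^2)$ forward to $\mcN(x_{i_t},1)$; any policy for the standardized problem acting on the history $\{\tilde r_s\}_{s<t}$ induces one for the robust problem acting on $\{r_s\}_{s<t}$ with the identical arm-selection law, and conversely. In the Bayesian setting the posterior over $x_{i_t}$ is the image of the posterior over $m_{i_t}$ under $m \mapsto (m-M)/\sigma_{s,i_t}$, so $\Pr{S_t=1}$ agrees in the two formulations, and so does the event $\{\Pr{S_t=1}\leq 1-\delta\}$ that triggers satisficing regret in \eqref{eq:satisficingRegret}; hence minimizing $J_R$ is the same optimization in both.

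I expect the main point to get right — rather than the computation, which is routine — is being precise about what ``equivalent'' should mean. It is an equivalence of combinatorial and informational structure: the same satisfying set, the same optimal arm, and a bijective observation map under which policies and posteriors transfer so that the satisficing-regret objectives coincide as optimization problems. It is \emph{not} the claim that the regret increments $\Delta^{\Pi}_{i_t}$ and $\Delta^{\satThresh}_{i_t}$ are numerically equal — they need not be, since $\Phi$ distorts differences — only that they vanish on exactly the same arms, which is all the lower bounds derived for Problems 2--4 and their Section~II analogues require. A secondary point I would state explicitly is that the agent is assumed to observe the real-valued reward $r_t$, so that $\tilde r_t$ is computable; the robust satisfaction criterion is phrased through $p_i$ in \eqref{eq:satisfactionR} precisely so that this equivalence goes through.
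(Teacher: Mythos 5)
Your proof is correct and follows essentially the same route as the paper's: compute $p_i = \Phi(x_i)$, invoke monotonicity of $\Phi$ to conclude that the ordering and the argmax are preserved, and observe that the standardized reward $\tilde r_t = (r_t - M)/\sigma_{s,i_t}$ is distributed as $\mcN(x_{i_t},1)$ so that $x_i$ plays the role of the mean reward. The extra material you supply --- the explicit threshold translation $\satThresh = \Phi^{-1}(\Pi)$, the policy/posterior transfer under the bijective observation map, and the caveat that the regret increments in the two formulations agree in sign but not in numerical value --- is material the paper places outside the lemma's proof (the threshold map appears as \eqref{eq:thresholdX} in Section IV-B) or leaves implicit, so making it explicit is a strengthening rather than a departure.
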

\begin{proof}
With Gaussian rewards, the probability \eqref{eq:happinessProbability} of happiness due to choosing arm $i$ is
\begin{align}
p_i &= \Pr{m_{i} + \sigma_{s,i} z \geq M} \nonumber \\
& = \Phi\left(\frac{m_{i}-M}{\sigma_{s,i}}\right) = \Phi(x_i), \label{eq:xpi}
\end{align}
where $z \sim \mcN(0,1)$ is a standard normal random variable and $\Phi(z)$ is its cumulative distribution function. Let $i^* = \argmax_i p_i$. The key insight is that $\Phi(\cdot)$ is a monotonically increasing function, which implies that the ordering of arms in terms of $p_i$ is identical to the ordering in terms of $x_i$. In particular, arm $i^*$ is the arm with maximal $x_i$. Therefore, satisfaction in terms of $r_t$ is equivalent to satisfaction in terms of the mean reward $x_i$.

This is again a Gaussian bandit problem: consider the standardized reward
\beq \label{eq:standardize} \tilde{r}_t = \frac{r_t - M}{\sigma_{s,i_t}}, \eeq
which is a Gaussian random variable $\tilde{r}_t \sim \mcN(x_{i_t},1).$ The quantity $x_i$ plays the role of the mean reward $m_i$ and the transformed rewards have uniform variance $\tilde{\sigma}_s^2 = 1$. Minimizing the robust satisficing regret in terms of $r_t$ is equivalent to minimizing the satisficing regret in terms of $x_{i}$.
\end{proof}

Lemma \ref{lem:equivalence} has two implications for the relationship between Problems 5--8 and Problems 1--4 when rewards are Gaussian distributed. First, each Problem 5--8 inherits a regret bound from the corresponding Problem 1--4. Second, each Problem 5--8 can be solved by applying the algorithm developed for Problem 1--4 by first applying the standardization transformation \eqref{eq:standardize} to the observed rewards.

\begin{remark}[Location-scale families]
Lemma \ref{lem:equivalence} is easily generalized to reward distributions belonging to location-scale families. A location-scale family is a set of probability distributions closed under affine transformations, i.e., if the random variable $X$ is in the family, so is the variable $Y = a + bX,$ where $a,b \in \bbR$. Any random variable $X$ in such a family with mean $\mu$ and standard deviation $\sigma$ can be written as $X = \mu + \sigma Z$, where $Z$ is a zero-mean, unit-variance member of the family. Examples include the uniform and Student's $t$-distributions.
\end{remark}

\subsection{Application to the Gaussian robust satisficing problems}
In this section we show how to use the equivalence result of Lemma \ref{lem:equivalence} for the full set of robust satisficing problems in the case of Gaussian rewards.

Recall from Lemma \ref{lem:equivalence} that the probability of happiness \eqref{eq:happinessProbability} due to picking an arm $i$ is $p_i$. In the proof of the lemma, we show that maximizing the probability of happiness is equivalent to maximizing the mean reward in a Gaussian multi-armed bandit problem with mean rewards $x_i = \Phi^{-1}(p_i)$, where $x_i$ is the standardized mean reward $(m_i-M)/\sigma_{s,i}$. Given an algorithm developed for one of the Problems 1--4 defined in Table \ref{tab:thresholdm}, it can be applied to the corresponding Problem 5--8 defined in Table \ref{tab:thresholdr} as follows. Standardize the observed rewards $r_t$ and run the algorithm using the standardized rewards $\tilde{r}_t = (r_t-M)/\sigma_{s,i_t}$ as input. For example, Problem 5, the robust multi-armed bandit problem, can be solved by an algorithm designed to solve Problem 1, the standard bandit problem, where rewards are transformed according to \eqref{eq:standardize} before being input to the algorithm. The same procedure allows one to apply algorithms developed for Problem 3, $\delta$-sufficing, to Problem 7, $\delta$-robust sufficing. 

For Problem 6, robust satisfaction, and Problem 8, $(\Pi,\delta)$-robust satisficing, we need a threshold $X$ that is analogous to the threshold $\satThresh$ defined for Problem 2, satisfaction-in-mean-reward, and Problem 4, $(\satThresh,\delta)$-satisficing. We use the relationship between $x_i$ and $p_i$ to derive the threshold. In particular, for a robust satisficing problem with probability of happiness threshold $\Pi$, define the threshold $X$ by
\beq \label{eq:thresholdX}
X = \Phi^{-1}(\Pi).
\eeq
When the rewards are Gaussian distributed, we can apply algorithms developed for Problems 2 and 4 to the corresponding robust satisficing Problems 6 and 8 by standardizing rewards and using the threshold $X$ defined in \eqref{eq:thresholdX} in place of the threshold $\satThresh$.

Lemma \ref{lem:equivalence} implies that the efficient performance guarantees for algorithms designed for Problems 1--4 also hold when they are used to solve the robust satisficing Problems 5--8.

\section{The UCL algorithm for Gaussian multi-armed bandit problems}
In this section we review the UCL algorithm, a Bayesian algorithm we developed and analyzed in~\cite{PR-VS-NEL:14}
to solve the standard Gaussian multi-armed bandit problem. The UCL algorithm was developed by applying the Bayesian upper confidence bound approach of \cite{EK-OC-AG:12} to the case of Gaussian rewards; the choice of Gaussian rewards facilitated the modeling of human decision-making behavior.

The UCL algorithm maintains a belief about the mean rewards $\bfm$ by starting with a prior and updating it using Bayesian inference as new rewards are received. At each time $t$ the algorithm chooses arm $i_t$ using a heuristic that is a simple function of the current belief state. For uninformative priors, the UCL algorithm achieves logarithmic regret, i.e., optimal performance.

Uninformative priors correspond to having no information about the mean rewards. A major advantage of the UCL algorithm is its ability to incorporate information about the mean rewards through the use of a so-called informative prior. In \cite{PR-VS-NEL:14}, we showed that an appropriately-chosen prior can significantly increase the performance of the UCL algorithm. Several different UCL algorithms were developed in \cite{PR-VS-NEL:14}, including a stochastic decision rule to model human behavior; here we cover only the deterministic UCL algorithm, which, for brevity, we refer to as the UCL algorithm.

\subsection{Prior}
The prior distribution captures the agent's knowledge about the vector of mean rewards $\bfm$ before beginning the task. We assume that the prior distribution is multivariate Gaussian with mean $\bfmu_0 \in \bbR^N$ and covariance $\Sigma_0 \in \bbR^{N \times N}$:
\beq \label{eq:prior}
\bfm \sim \mcN(\bfmu_0,\Sigma_0).
\eeq
The $i^{th}$ element of $\bfmu_0$, denoted by $\mu_i^0$, represents the agent's mean belief of the reward $m_i$ associated with arm $i$. The $(i,i)$ element of $\Sigma_0$, denoted by $\left(\sigma_{i}^0 \right)^2$, represents the agent's uncertainty associated with that belief. Off-diagonal elements of $\Sigma_0$, e.g., $\sigma_{ij}^0$, represent the agent's perceived relationship between $m_i$ and $m_j$: if $\sigma_{ij}^0$ is positive, high values of $m_i$ are correlated with high values of $m_j$, while if it is negative, high values of $m_i$ correlate with low values of $m_j$. Any positive-definite matrix can be used as $\Sigma_0$, but it is often useful to consider a structured parametrization, such as $\Sigma_0 = \sigma_0^2 \Sigma$, where $\sigma_0^2 > 0$ encodes the agent's uncertainty.
One important special case is an uncorrelated prior, where $\Sigma$ is diagonal, which corresponds to the agent perceiving the rewards associated with different arms to be independent. Another important special case is an uninformative prior, which corresponds to complete uncertainty, i.e., the limit $\sigma_0^2 \to +\infty$; an uninformative prior can be thought of as a special case of an uncorrelated prior.

\subsection{Inference update}
At each time $t$ the agent picks an arm $i_t$ and receives a reward $r_t$ that is Gaussian distributed: $r_t \sim \mcN(m_{i_t},\sigma_{s,i_t}^2)$. Bayesian inference provides an optimal solution to the problem of updating the belief state $(\bfmu_t,\Sigma_t)$ (i.e., the sufficient statistics for estimating $\bfm$) to incorporate this new information. Let $\Lambda_t = \Sigma_t^{-1}$, and let $\boldsymbol \phi_t \in \bbR^N$ be the vector with element $i_t$ equal to 1 and all other elements equal to zero. Then given the Gaussian prior \eqref{eq:prior}, the Bayesian update equations are linear \cite{SMK:93}:
\begin{align}\label{eq:inference}
\begin{split}
\mathbf{q} &= \frac{r_t \boldsymbol \phi_t}{\sigma_{s,i_t}^2} + \Lambda_{t-1} \boldsymbol \mu_{t-1} \\
\Lambda_{t} &= \frac{\boldsymbol \phi_t \boldsymbol \phi_t^T}{\sigma_{s,i_t}^2} + \Lambda_{t-1},  \\
\boldsymbol \mu_t &= \Sigma_t \mathbf{q}.
\end{split}
\end{align}

\subsection{Decision heuristic}
At each time $t$ the UCL algorithm computes a value $Q_i^t$ for each arm $i$. The algorithm then picks the arm $i_t$ that maximizes $Q_i^t$. That is, it picks
\beq \label{eq:it}
i_t = \arg \max_i Q_i^t.
\eeq
The heuristic value $Q_i^t$ is
\beq \label{eq:Q}
Q_i^t = \mu_i^t + \sigma_i^t \Phi^{-1}(1-\alpha_t),
\eeq
where $\mu_i^t = \left( \bfmu_t \right)_i$, $\left( \sigma_i^t\right)^2 = \left(\Sigma_t \right)_{ii}, \alpha_t = 1/(Kt),$ $K > 0$ is a tunable parameter, and $\Phi^{-1}(\cdot)$ is the quantile function of the standard normal random variable.
The heuristic $Q_i^t$ is a Bayesian upper limit for the value of $m_i$ based on the information available at time $t$. It represents an optimistic assessment of the value of $m_i$. The decision made can be thought of as the most optimistic one consistent with the current information.

\subsection{Performance}
In \cite{PR-VS-NEL:14}, we studied the case of homogeneous sampling noise (i.e., $\sigma_{s,i}^2 = \sigma_s^2$ for each $i$) and showed that the UCL algorithm achieves logarithmic cumulative expected regret uniformly in time. In particular, we proved that the following theorem holds. We define $\seqdef{\supscr{R}{UCL}_t}{t\in\until{T}}$ as the sequence of expected regret for the deterministic UCL algorithm.
\begin{theorem}[Regret of the deterministic UCL algorithm \cite{PR-VS-NEL:14}]\label{thm:UCL}
The following statements hold for the Gaussian multi-armed bandit problem and the deterministic UCL algorithm with uncorrelated uninformative prior and $K=1$:
\begin{enumerate}
\item the expected number of times a suboptimal arm $i$ is chosen until time $T$ satisfies
\begin{align*}
\E{n_{i}^T} \leq \Big( \frac{8 \sigma_s^2}{\Delta_i^2} + 2 \Big) \log T + 3\; ; 
\end{align*}
\item the cumulative expected regret until time $T$ satisfies
\begin{align*}
J_R = \sum_{t=1}^T \supscr{R_t}{}  \leq 
\sum_{i=1}^N \Delta_i \Bigg(\Big( \frac{8 \sigma_s^2}{\Delta_i^2} + 2 \Big) \log T + 3 \Bigg).
\end{align*}
\end{enumerate}
\end{theorem}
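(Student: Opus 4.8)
The plan is to adapt the standard upper-confidence-bound regret analysis (as in \cite{PA-NCB-PF:02} and, in the Bayesian setting, \cite{EK-OC-AG:12}) to the Gaussian UCL heuristic. I would first record the simplifications the hypotheses afford. With an uncorrelated uninformative prior the inference update \eqref{eq:inference} decouples across arms and collapses to empirical statistics: after arm $i$ has been sampled $n_i^t$ times, $\mu_i^t$ equals the sample mean $\bar r_i^t$ of those rewards and $(\sigma_i^t)^2 = \sigma_s^2/n_i^t$, so with $K = 1$ the heuristic \eqref{eq:Q} reads $Q_i^t = \bar r_i^t + (\sigma_s/\sqrt{n_i^t})\,\Phi^{-1}(1 - 1/t)$. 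I would also record the elementary bound $\Phi^{-1}(1 - 1/t) \le \sqrt{2 \log t}$, which follows from the Gaussian tail estimate $1 - \Phi(x) \le \tfrac12 e^{-x^2/2}$, and set $\ell_i := \big\lceil 8 \sigma_s^2 (\log T)/\Delta_i^2 \big\rceil$, the count past which the confidence radius $(\sigma_s/\sqrt{n_i^t})\Phi^{-1}(1 - 1/t)$ is below $\Delta_i/2$ for every $t \le T$.

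Next comes the count decomposition. Arm $i \neq i^*$ is selected at time $t$ only if $Q_i^t \ge Q_{i^*}^t$; combined with $n_i^t > \ell_i$, this forces at least one of two ``concentration failures'': either the optimal arm is underestimated, $Q_{i^*}^t \le m_{i^*}$, or arm $i$'s sample mean overshoots, $\bar r_i^t \ge m_i + \Delta_i/2$. Therefore
\begin{align*}
\E{n_i^T} \le{}& \ell_i + \sum_{t=1}^T \Pr{Q_{i^*}^t \le m_{i^*}} \\
&{}+ \sum_{t=1}^T \Pr{i_t = i,\ \bar r_i^t \ge m_i + \Delta_i/2,\ n_i^t > \ell_i} ,
\end{align*}
and $\ell_i \le 8\sigma_s^2 (\log T)/\Delta_i^2 + 1$ already supplies the leading term. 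Statement (2) then follows with no extra work: by the definition of regret each pull of suboptimal arm $i$ costs $\Delta_i$, so $J_R = \sum_{i=1}^N \Delta_i \E{n_i^T}$, into which the bound of (1) is substituted.

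It remains to bound the two failure sums, together with the constant coming from the initialization phase in which each arm is sampled once. The overshoot sum is easy: since each value of the pull count is realized on at most one time step, it is at most $\sum_{s > \ell_i} \Pr{\bar r_{i,s} - m_i \ge \Delta_i/2} \le \sum_{s > \ell_i} e^{-s \Delta_i^2 / (8 \sigma_s^2)}$, a geometric series whose sum over $t$ is a constant; these constants, plus the ceilings, are absorbed into the ``$+3$''. The underestimation sum $\sum_{t=1}^T \Pr{Q_{i^*}^t \le m_{i^*}}$ is the crux and the main obstacle. Rewriting the event through the partial sums of arm $i^*$'s i.i.d.\ rewards and using the exact quantile identity $1 - \Phi(\Phi^{-1}(1 - 1/t)) = 1/t$, the per-step failure probability is only of order $1/t$ --- in contrast to UCB1, where the analogous probability is $O(1/t^{2})$ and the whole residual is constant. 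Consequently one cannot simply union-bound over the random count $n_{i^*}^t$ (that would blow the estimate up); instead the sum must be controlled by a more delicate peeling argument over the count, yielding a genuinely logarithmic contribution whose constant, after the bookkeeping with $\Phi^{-1}(1-1/t) \le \sqrt{2\log t}$, is the ``$2$'' in $(8\sigma_s^2/\Delta_i^2 + 2)\log T$. Getting this last step tight --- rather than settling for a larger constant or a weaker bound --- is where the real work lies.
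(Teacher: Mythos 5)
Your setup is sound: the reduction of the heuristic \eqref{eq:Q} to $\bar r_i^t + (\sigma_s/\sqrt{n_i^t})\,\Phi^{-1}(1-1/t)$ under an uncorrelated uninformative prior, the quantile bound $\Phi^{-1}(1-1/t)\le\sqrt{2\log t}$, the threshold $\ell_i=\lceil 8\sigma_s^2(\log T)/\Delta_i^2\rceil$, the geometric-series treatment of the overshoot term, and the passage from statement (1) to statement (2) are all correct. But there is a genuine gap at precisely the step you yourself flag as ``where the real work lies'': you never bound $\sum_{t=1}^T \Pr{Q_{i^*}^t\le m_{i^*}}$; you only assert that an unspecified ``peeling argument'' will produce $2\log T$. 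That assertion also mislocates the source of the constant $2$. In the paper's argument (laid out in the proof of Theorem~\ref{thm:satisficingInTheMean}, which explicitly proceeds ``as in the proof of Theorem~\ref{thm:UCL} in \cite{PR-VS-NEL:14}''), the event $Q_i^t\ge Q_{i^*}^t$ is split into \emph{three} conditions \eqref{eq:muiHigh}--\eqref{eq:deltaSmall}: arm $i$ overestimated by its own confidence radius $C_i^t$, arm $i^*$ underestimated by $C_{i^*}^t$, and the gap condition $m_{i^*}<m_i+2C_i^t$. The third never holds once $n_i^t\ge 8\sigma_s^2(\log T)/\Delta_i^2$, and each of the first two is a pure quantile event: conditional on the count, $\mu_j^t=\bar m_j^t$ is Gaussian with standard deviation $\sigma_s/\sqrt{n_j^t}$, so each holds with probability exactly $\alpha_t=1/(Kt)$. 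The ``$2$'' is then simply $2\sum_{t=1}^T 1/t\le 2(1+\log T)$ coming from the \emph{two} concentration events, and $\lceil\cdot\rceil\le\cdot+1$ supplies the remaining unit of the $+3$.

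Note also that your alternative, UCB1-style decomposition (overshoot of $\bar r_i^t$ past $m_i+\Delta_i/2$, which sums to a constant, plus the underestimation event $Q_{i^*}^t\le m_{i^*}$) would, if the conditioning step were granted, yield a coefficient of $1$ rather than $2$ on the non-leading $\log T$ --- a \emph{stronger} bound than the theorem --- so no inflation of the underestimation sum to $2\log T$ is needed or expected; wiring the ``$2$'' into that term is a sign the bookkeeping has not actually been done. To close the proof you must either adopt the paper's per-$t$ computation $\Pr{Q_{i^*}^t\le m_{i^*} - C_{i^*}^t}=\alpha_t$ conditional on $n_{i^*}^t$ (which is how the paper handles the random count --- by direct conditioning, not by a union bound or peeling), or actually supply the peeling argument you invoke. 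As written, the central estimate of the proof is missing.
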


The implication of this theorem can be seen by comparing  1) with the Lai-Robbins bound \eqref{eq:LaiRobbinsGaussian}:  the UCL algorithm achieves logarithmic regret uniformly in time with a constant that differs from the optimal asymptotic one by a constant factor, and thus is considered to have optimal performance.

\section{Algorithms for satisficing Gaussian multi-armed bandit problems}
In this section we develop algorithms for solving Gaussian multi-armed bandit problems with the satisficing objectives proposed in Section \ref{sec:satisficingObjectives}. All the algorithms consist of modified versions of the UCL algorithm. We analyze the algorithms and show that they achieve efficient performance. The UCL algorithm solves the standard Gaussian multi-armed bandit problem, i.e., the satisficing Gaussian multi-armed bandit problem with $\satThresh > m_{\sigma(2)}$ and $\delta = 0$ (Problem 1). We develop three new UCL variants for Problems 2--4 in Table \ref{tab:thresholdm}. These algorithms can then be applied to Problems 5--8 in Table \ref{tab:thresholdr}. At the end of the section, we consider extensions to reward distributions other than the Gaussian with known variance.

\subsection{Problem 2: Satisfaction-in-mean-reward UCL algorithm}
\label{prob2alg}
A simple modification of the UCL algorithm achieves logarithmic regret for the Gaussian satisfaction-in-mean-reward problem, which is the satisficing-in-mean-reward multi-armed bandit problem with $\satThresh \le m_{\sigma(2)}$ and $\delta=0$ (Problem 2). We define this algorithm, which we refer to as the \emph{satisfaction-in-mean-reward UCL algorithm}, as follows.

As in \eqref{eq:Q}, define the heuristic value $Q_i^t$ as
\[ Q_i^t = \mu_i^t + \sigma_i^t \Phi^{-1}(1- \alpha_t), \]
where $\alpha_t = 1/(Kt)$ and $K>0$ is again a tunable parameter.

Let $\satThresh \in \bbR$ be the satisfaction threshold, so the agent is satisfied if it picks an arm with $m_i \geq \satThresh$. Let the eligible set at time $t$ be $\{ i \; |\; Q_i^t \geq \satThresh\}$.
In contrast to the UCL selection scheme \eqref{eq:it} that picks the arm with maximal $Q_i^t$, satisfaction-in-mean-reward UCL picks any arm in the eligible set.
That is, if the eligible set is non-empty, then 
\beq \label{eq:satisficingit}
i_t \in \{ i | Q_i^t \geq \satThresh \}, 
\eeq
or if the eligible set is empty, then satisfaction-in-mean-reward UCL picks the arm with maximal $Q_i^t$. 
Thus, if the most recently selected arm is in the eligible set, it may be selected again even if it does not have the maximal $Q_i^t$. 

The satisfaction-in-mean-reward UCL algorithm achieves logarithmic cumulative expected satisfaction-in-mean-reward regret, as guaranteed by the following theorem.
\begin{theorem}[Regret of the satisfaction-in-mean-reward UCL algorithm] \label{thm:satisficingInTheMean}
Let a Gaussian multi-armed bandit problem with the satisfaction-in-mean-reward objective have at least one arm $i$ that obeys $m_i > \satThresh$, and, without loss of generality, assume $\sigma_{s,i}^2 = 1$ for each arm $i$. Then, the following statements hold for the satisfaction-in-mean-reward UCL algorithm with uncorrelated uninformative prior and $K = 1$:
\begin{enumerate}
\item the expected number of times a non-satisfying arm $i$ is chosen until time $T$ satisfies
\begin{align*}
\E{n_i^T} &\leq \left( \frac{8}{\left( \Delta_i^{\satThresh} \right)^2} + 3 \right) \log T + 4;
\end{align*}
\item the cumulative expected satisfaction-in-mean-reward regret until time $T$ satisfies
\begin{align*}
J_{SM} &\leq \sum_{i=1}^N \Delta_i^{\satThresh} \left( \left( \frac{8}{\left( \Delta_i^{\satThresh} \right)^2} + 3 \right) \log T + 4 \right).
\end{align*}
\end{enumerate}
\end{theorem}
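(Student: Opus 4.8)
The plan is to establish statement~1 and then read off statement~2. Since the theorem fixes $\delta = 0$, the condition $\Pr{S_t=1}\le 1-\delta$ in \eqref{eq:satisficingRegret} holds at every $t$, so the per-step satisficing regret equals $\Delta_{i_t}^{\satThresh}$ and hence $J_{SM} = \E{\sum_{t=1}^T \Delta_{i_t}^{\satThresh}} = \sum_{i=1}^N \Delta_i^{\satThresh}\,\E{n_i^T}$; as $\Delta_i^{\satThresh}=0$ for every satisfying arm, substituting the bound of statement~1 into the remaining (non-satisfying) terms gives statement~2. It therefore suffices to bound $\E{n_i^T}$ for a fixed non-satisfying arm $i$, for which $\Delta_i^{\satThresh} = \satThresh - m_i > 0$. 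With the uncorrelated uninformative prior, unit sampling variance, and $K=1$, the posterior for arm $j$ after $n_j^t$ samples is $\mcN(\mu_j^t, 1/n_j^t)$ with $\mu_j^t$ the empirical mean, so $Q_j^t = \mu_j^t + \beta_t/\sqrt{n_j^t}$ where $\beta_t := \Phi^{-1}(1-1/t)$. The first step is the decomposition: arm $i$ is selected at time $t$ only if either (a)~$Q_i^t\ge\satThresh$ (arm $i$ is in the eligible set) or (b)~the eligible set is empty (and arm $i$ attains the maximal index), since if $Q_i^t<\satThresh$ and the eligible set is nonempty the algorithm picks an eligible arm $\neq i$. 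Hence $n_i^T \le (\mathrm{I}) + (\mathrm{II})$ with $(\mathrm{I}) = \sum_{t=1}^T \indicator{i_t = i,\ Q_i^t \ge \satThresh}$ and $(\mathrm{II}) = \sum_{t=1}^T \indicator{i_t=i,\ \textup{eligible set empty}}$.

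Bounding $(\mathrm{I})$ is essentially the UCL regret argument of Theorem~\ref{thm:UCL}, but simpler, because arm $i$'s index is compared against the \emph{deterministic} threshold $\satThresh = m_i + \Delta_i^{\satThresh}$ rather than against the random optimal index, so no term controlling underestimation of a good arm is needed here. Put $\ell = \lceil 8\log T/(\Delta_i^{\satThresh})^2\rceil$. The at most $\ell$ selections of arm $i$ with $n_i^t\le\ell$ contribute at most $\ell$. On any selection with $n_i^t>\ell$ one has $\beta_t/\sqrt{n_i^t} \le \sqrt{2\log T}/\sqrt{\ell} \le \Delta_i^{\satThresh}/2$ (using $\beta_t<\sqrt{2\log t}$, which follows from $1-\Phi(x)\le\tfrac12 e^{-x^2/2}$), so $Q_i^t\ge\satThresh$ forces the empirical mean of arm $i$ to exceed $m_i+\Delta_i^{\satThresh}/2$; counting such selections per value of $n_i^t$ and summing the Gaussian tail bound $\Pr{\mu_i^{t}-m_i>\Delta_i^{\satThresh}/2 \mid n_i^t=n}\le e^{-n(\Delta_i^{\satThresh})^2/8}$ over $n>\ell$ yields a bounded contribution. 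This gives $\E{(\mathrm{I})} \le \tfrac{8}{(\Delta_i^{\satThresh})^2}\log T + \bigO{1}$, well within the claimed bound.

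The new ingredient is $(\mathrm{II})$. If the eligible set is empty at time $t$, every arm has a finite index $Q_j^t$ and therefore has been sampled at least once (an unsampled arm has infinite posterior variance, hence index $+\infty\ge\satThresh$); taking $a$ to be an arm with $m_a>\satThresh$, which exists by hypothesis, we get $n_a^t\ge 1$ and $Q_a^t<\satThresh<m_a$. Thus $(\mathrm{II}) \le \#\{t\le T:\ Q_a^t<m_a\}$. Now $Q_a^t<m_a$ means $\mu_a^t-m_a<-\beta_t/\sqrt{n_a^t}$, and conditionally on $n_a^t=n\ge 1$ the variable $\sqrt{n}(\mu_a^t-m_a)$ is standard normal, so (modulo the standard care needed because $n_a^t$ is correlated with arm $a$'s rewards, handled as in the analysis of Theorem~\ref{thm:UCL}) $\Pr{Q_a^t<m_a}\le\Phi(-\beta_t)=1-\Phi(\beta_t)=1/t$ by the calibration $\alpha_t=1/t$. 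Summing, $\E{(\mathrm{II})} \le \sum_{t=1}^T 1/t \le 1+\log T$. Adding this to $\E{(\mathrm{I})}$ and collecting constants as in the proof of Theorem~\ref{thm:UCL} (the initial time steps, where $\beta_t$ is degenerate, are absorbed via the usual initialization convention) gives $\E{n_i^T}\le\big(\tfrac{8}{(\Delta_i^{\satThresh})^2}+3\big)\log T + 4$, which is statement~1.

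The main obstacle is exactly the control of $(\mathrm{II})$: whenever the eligible set empties the algorithm reverts to the UCL selection rule, and one must argue this cannot cost more than logarithmically much additional exploration of non-satisfying arms. The quantitative handle is the identity $\Pr{Q_a^t<m_a}=1/t$ — the same calibration that produces logarithmic regret in Theorem~\ref{thm:UCL} — together with the (routine but non-trivial) bookkeeping needed to pass from per-sample tail bounds to a bound on the randomly-indexed quantity $Q_a^t$. The rest, in particular matching the stated constants $3$ and $4$, is routine and mirrors the constant collection in the proof of Theorem~\ref{thm:UCL}.
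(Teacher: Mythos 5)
Your proof is correct and uses the same top-level decomposition as the paper's: the selections of a non-satisfying arm $i$ are split into times when $i$ is in the eligible set ($Q_i^t \geq \satThresh$) and times when the eligible set is empty and $i$ wins the fallback UCL comparison. Your treatment of the first term is the paper's argument in slightly different clothing: you force the empirical mean above $m_i + \Delta_i^{\satThresh}/2$ once $n_i^t$ exceeds $\ell = \lceil 8\log T/(\Delta_i^{\satThresh})^2\rceil$ and sum Gaussian tails over the sample count, whereas the paper splits into the events \eqref{eq:muiHighThresh}--\eqref{eq:deltaSmallThresh} and sums $\alpha_t = 1/t$ over time; both rest on the bound \eqref{eq:QDown}. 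Where you genuinely depart is the empty-eligible-set term: the paper bounds it by $\Pr{Q_i^t \geq Q_{i^*}^t}$ and runs the full three-event decomposition \eqref{eq:muiHigh}--\eqref{eq:deltaSmall}, paying $2/t$ per step plus a gap condition in $\Delta_i$ (and then must check $\Delta_i^{\satThresh} \leq \Delta_i$ so one choice of $\eta$ covers both conditions), while you observe that an empty eligible set already forces $Q_a^t < \satThresh \leq m_a$ for a satisfying arm $a$, an event of probability at most $\alpha_t = 1/t$, so the whole term costs $\sum_{t=1}^T 1/t \leq 1 + \log T$ without ever comparing $Q_i^t$ to $Q_{i^*}^t$. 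This is cleaner and slightly tighter, and it is also exactly where the hypothesis that a satisfying arm exists enters, as it should; your aside that an unsampled arm has infinite index and hence is always eligible is a point the paper leaves implicit but that is needed to condition on $n_a^t \geq 1$. Your conditioning on $n_a^t$ to obtain a standard normal is at the same level of rigor as the paper's identical step. The only loose ends are bookkeeping: the geometric tail $\sum_{n > \ell} e^{-n(\Delta_i^{\satThresh})^2/8}$ is a harmless constant only after noting the bound is vacuous unless $T \geq \ell$, and your constants come out with room to spare rather than matching $3$ and $4$ exactly; neither affects correctness.
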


To prove Theorem \ref{thm:satisficingInTheMean} we  use the following bound from \cite{MA-IAS:64}.
\begin{lemma}[\bit{Bounds on the inverse Gaussian cdf}]\label{thm:tailBounds}
For the standard normal (i.e., Gaussian) random variable $z$ and a constant $w \in \bbR_{\geq 0},$
\beq \label{eq:subGaussianBound}
\Pr{z \geq w} \leq \frac{2 e^{-w^2/2}}{\sqrt{2 \pi}(w + \sqrt{w^2 + 8/\pi})} \leq \frac{1}{2} e^{-w^2/2}.
\eeq
It follows from \eqref{eq:subGaussianBound} that for any $\alpha \in [0.5, 1],$
\beq \label{eq:QDown}
\Phi^{-1}(1-\alpha) \leq \sqrt{-2 \log \alpha}.
\eeq

\end{lemma}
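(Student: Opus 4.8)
The plan is to split the two displayed bounds \eqref{eq:subGaussianBound}--\eqref{eq:QDown} into three sub-claims and dispatch them in order. The leftmost inequality in \eqref{eq:subGaussianBound}, the Mills-ratio estimate $\Pr{z\ge w}\le \frac{2 e^{-w^2/2}}{\sqrt{2\pi}\,(w+\sqrt{w^2+8/\pi})}$, is the classical Gaussian tail bound tabulated in \cite{MA-IAS:64}, so I would simply quote it. (A self-contained derivation is available through the representation $\sqrt{2\pi}\,e^{w^2/2}\Pr{z\ge w}=\int_0^\infty e^{-wt-t^2/2}\,dt$ for the Mills ratio together with a sharp rational upper bound on that integral, or via a comparison argument using $M'(w)=wM(w)-1$; but this is standard and is not needed for the paper.) Granting this, the other two inequalities are short.

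For the rightmost inequality in \eqref{eq:subGaussianBound}, I would divide both sides by the common positive factor $e^{-w^2/2}$, reducing the claim to $w+\sqrt{w^2+8/\pi}\ge \frac{4}{\sqrt{2\pi}}$. Since $\big(\tfrac{4}{\sqrt{2\pi}}\big)^2=\tfrac{16}{2\pi}=\tfrac 8\pi$, the two sides agree at $w=0$, and $\frac{d}{dw}\big(w+\sqrt{w^2+8/\pi}\big)=1+\frac{w}{\sqrt{w^2+8/\pi}}>0$ for $w\ge 0$, so the left side is nondecreasing; hence the inequality holds for every $w\ge 0$.

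For \eqref{eq:QDown}, I would set $w:=\Phi^{-1}(1-\alpha)$, so that by continuity of the standard normal, $\Pr{z\ge w}=\alpha$. If $\alpha\ge\tfrac12$ then $w\le 0\le\sqrt{-2\log\alpha}$ and there is nothing to prove; otherwise $w>0$ and \eqref{eq:subGaussianBound} applies, giving $\alpha=\Pr{z\ge w}\le\tfrac12 e^{-w^2/2}\le e^{-w^2/2}$. Taking logarithms yields $\log\alpha\le -w^2/2$, hence $w^2\le-2\log\alpha$, i.e.\ $\Phi^{-1}(1-\alpha)=w\le\sqrt{-2\log\alpha}$, which in particular covers the asserted range. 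The only non-routine ingredient in the whole argument is the imported bound from \cite{MA-IAS:64}; everything after it is a one-line monotonicity reduction and one application of the logarithm, so I do not expect any real obstacle.
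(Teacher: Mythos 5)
Your proposal is correct and takes essentially the same route as the paper: the paper likewise simply imports the Mills-ratio estimate from \cite{MA-IAS:64} and asserts \eqref{eq:QDown} as a consequence, so your monotonicity check that $w+\sqrt{w^2+8/\pi}\ge 4/\sqrt{2\pi}$ and the one-line logarithm step supply exactly the details the paper omits. Your observation that the argument in fact covers all $\alpha\in(0,1]$ is worth keeping, since for the stated range $\alpha\in[0.5,1]$ the bound \eqref{eq:QDown} is vacuous (the left side is nonpositive), whereas the paper actually invokes it with $\alpha_t=1/(Kt)<1/2$.
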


\begin{proof}[Proof of Theorem \ref{thm:satisficingInTheMean}]
The proof proceeds as in the proof of Theorem \ref{thm:UCL} in \cite{PR-VS-NEL:14}, which itself follows the proofs in \cite{PA-NCB-PF:02}. Let $i$ be a non-satisfying arm, i.e.,  $m_i < \satThresh$, and recall that $i^*$ designates the maximum mean reward. Then 
\begin{align*}
\E{n_i^T} &= \sum_{t=1}^T \Pr{i_t = i} \\
	&\leq \sum_{t=1}^T \Pr{Q_i^t \! \geq \! \satThresh} \! + \! \Pr{Q_i^t \geq Q_{i^*}^t
	\; \! \& \; \! \max_{j} Q_j^t \! < \! \satThresh}\\
	&\leq \eta + \sum_{t=1}^T \left(\Pr{Q_i^t \geq \satThresh, n_i^t \geq \eta}\right. \\
	&\qquad \qquad \quad \left. + \Pr{Q_i^t \geq Q_{i^*}^t, n_i^t \geq \eta}\right).
\end{align*}
The first term in the summand corresponds to the probability that the non-satisfying arm $i$ is in the eligible set, while the second term 
corresponds to the probability that the eligible set is empty and that a non-satisfying arm appears better than an optimal arm.

The statement $Q_i^t \geq Q_{i^*}^t$ implies that at least one of the following inequalities holds:
\begin{align}
\mu_i^t &\geq m_i + C_i^t \label{eq:muiHigh} \\
\mu_{i^*}^t &\leq m_{i^*} - C_{i^*}^t \label{eq:muiStarLow}\\
m_{i^*} &< m_i + 2 C_i^t, \label{eq:deltaSmall}
\end{align}
where $C_i^t = \sigma_i^t \Phi^{-1}(1-\alpha_t)$ and $\alpha_t = 1/(Kt)$. Otherwise, if none of \eqref{eq:muiHigh}--\eqref{eq:deltaSmall} holds, then
\[ Q_{i^*} = \mu_{i^*}^t + C_{i^*}^t > m_{i^*} \geq m_i + 2 C_i^t > \mu_i^t + C_i^t = Q_i^t. \]

We first analyze the probability that \eqref{eq:muiHigh} holds.
For an uncorrelated uninformative prior, $\mu_i^t$ is equal to $\bar{m}_i^t$, the empirical mean reward observed at arm $i$ until time $t$, and $\sigma_i^t = 1/\sqrt{n_i^t}$. Therefore, for an uncorrelated uninformative prior, 
\[ Q_i^t = \bar{m}_i^t + \frac{1}{\sqrt{n_i^t}} \Phi^{-1}(1-\alpha_t). \]

Conditional on $n_i^t$, the empirical mean reward $\bar{m}_i^t$ is itself a Gaussian random variable with mean $m_i$ and standard deviation $1/\sqrt{n_i^t}$, so  \eqref{eq:muiHigh} holds if
\begin{align*}
\bar{m}_i^t &\geq m_i + \frac{1}{\sqrt{n_i^t}} \Phi^{-1}(1-\alpha_t) \\
\Leftrightarrow m_i + \frac{z}{\sqrt{n_i^t}} &\geq m_i + \frac{1}{\sqrt{n_i^t}} \Phi^{-1}(1-\alpha_t)\\
\Leftrightarrow z & \geq \Phi^{-1}(1-\alpha_t),
\end{align*}
where $z \sim \mcN(0,1)$ is a standard normal random variable. Thus, for an uninformative prior,
$\Pr{ \eqref{eq:muiHigh} \text{ holds}} = \alpha_t = \frac{1}{Kt}$.

Similarly,  \eqref{eq:muiStarLow} holds if
\begin{align*}
\bar{m}_{i^*}^t &\leq m_{i^*} - C_{i^*}^t\\
\Leftrightarrow m_{i^*} + \frac{z}{\sqrt{n_i^t}} &\leq m_i - \frac{1}{\sqrt{n_i^t}} \Phi^{-1}(1-\alpha_t)\\
\Leftrightarrow z &\leq - \Phi^{-1}(1-\alpha_t),
\end{align*}
where $z \sim \mcN(0,1)$ is a standard normal random variable. Thus, for an uninformative prior,
$\Pr{ \eqref{eq:muiStarLow} \text{ holds}} = \alpha_t = \frac{1}{Kt}$.

Inequality \eqref{eq:deltaSmall} holds if 
\begin{align}
m_{i^*} &< m_i + \frac{2}{\sqrt{n_i^t}} \Phi^{-1}(1-\alpha_t)\nonumber \\
\Leftrightarrow \Delta_i &< \frac{2}{\sqrt{n_i^t}} \Phi^{-1}(1-\alpha_t) \;\;
\Leftrightarrow \frac{\Delta_i^2 n_i^t}{4} < - 2 \log \alpha_t\label{eq:quantileBoundApply}\\
\Rightarrow \frac{\Delta_i^2 n_i^t}{4} &< 2 \log t 
\;\; \Rightarrow \frac{\Delta_i^2 n_i^t}{4} < 2 \log T \nonumber
\end{align}
where $\Delta_i = m_{i^*} - m_i$ and inequality \eqref{eq:quantileBoundApply} follows from bound \eqref{eq:QDown}. Thus, for an uninformative prior,  \eqref{eq:deltaSmall} never holds if
\beq \label{eq:eta}
n_i^t \geq \frac{8}{\Delta_i^2} \log T.
\eeq
Thus, for $n_i^t$ sufficiently large, $\Pr{Q_i^t \geq Q_{i^*}^t} = 2/(Kt)$.

We now bound the probability $\Pr{Q_i^t \geq \satThresh}$ that a non-satisfying arm $i$ is in the eligible set. Note that $Q_i^t \geq \satThresh$ implies that at least one of the following inequalities holds:
\begin{align}
\mu_i^t &\geq m_i + C_i^t \label{eq:muiHighThresh}\\
\satThresh &< m_i + 2 C_i^t. \label{eq:deltaSmallThresh}
\end{align}
Otherwise, if neither \eqref{eq:muiHighThresh} nor \eqref{eq:deltaSmallThresh} holds,
$\satThresh \geq m_i + 2 C_i^t > \mu_i^t + C_i^t = Q_i^t $
and arm $i$ is not in the eligible set.

 \eqref{eq:muiHighThresh} is identical to \eqref{eq:muiHigh} and \eqref{eq:deltaSmallThresh} to  \eqref{eq:deltaSmall}. For an uninformative prior,
$ \Pr{ \eqref{eq:muiHighThresh} \text{ holds}} = \alpha_t = \frac{1}{Kt}$.  And
 \eqref{eq:deltaSmallThresh}  holds if
\begin{align*}
\satThresh &< m_i + \frac{2}{\sqrt{n_i^t}} \Phi^{-1}(1-\alpha_t) \\
\Leftrightarrow \Delta_i^{\satThresh} &< \frac{2}{\sqrt{n_i^t}} \Phi^{-1}(1-\alpha_t) \\
\Leftrightarrow \frac{\left(\Delta_i^{\satThresh}\right)^2 n_i^t}{4} &< -2\log(\alpha_t) \\
\Rightarrow \frac{\left(\Delta_i^{\satThresh}\right)^2 n_i^t}{4} &< 2 \log t \;\;
\Rightarrow \frac{\left(\Delta_i^{\satThresh}\right)^2 n_i^t}{4} < 2 \log T. \\
\end{align*}
Thus, for an uninformative prior,  \eqref{eq:deltaSmallThresh} never holds if
\beq \label{eq:etaThresh}
n_i^t \geq \frac{8}{\left( \Delta_i^{\satThresh} \right)^2} \log T .
\eeq

Since $m_{i^*} \geq \satThresh$, for each non-satisfying arm $i$, $\Delta_i^{\satThresh} \leq \Delta_i$. Thus, $1/\left( \Delta_i^{\satThresh} \right)^2 \geq 1/\Delta_i^2$ and  \eqref{eq:etaThresh} implies  \eqref{eq:eta}. So setting
\beq \label{eq:etaDef}
\eta = \left\lceil \frac{8}{\left( \Delta_i^{\satThresh} \right)^2} \log T \right\rceil
\eeq
yields the bound
\begin{align*}
\E{n_i^T} &\leq \eta + \sum_{t=1}^T \left( \Pr{Q_i^t \geq \satThresh, n_i^t \geq \eta} \right. \\
		& \qquad \qquad \quad + \left. \Pr{Q_i^t \geq Q_{i^*}^t, n_{i}^t \geq \eta} \right) \\
		& < \left\lceil \frac{8}{\left( \Delta_i^{\satThresh} \right)^2} \log T \right\rceil + 3 \sum_{t=1}^T \frac{1}{t}.
\end{align*}
The sum can be bounded by the integral
\[ \sum_{t=1}^T \frac{1}{t} \leq 1 + \int_1^T \frac{1}{t} \mathrm{d}t = 1 + \log T, \]
yielding the bound in the first statement of the theorem:
\begin{align*}
\E{n_i^T} &\leq \left( \frac{8}{\left( \Delta_i^{\satThresh} \right)^2} + 3 \right) \log T + 4.
\end{align*}
The second statement of the theorem follows from the definition \eqref{eq:satisficingRegret} of expected satisficing regret.
\end{proof}

\subsection{Problem 3: $\delta$-sufficing UCL algorithm}
\label{prob3alg}
An alternative modification of the UCL algorithm achieves finite satisficing regret in the Gaussian $\delta$-sufficing problem, which is the satisficing-in-mean-reward multi-armed bandit with $\satThresh > m_{\sigma(2)}$ and $\delta \in (0,1]$ (Problem 3). For the agent, this can be thought of as wanting to have finite confidence that it has found the unknown optimal arm ${\sigma(1)}$. For the $\delta$-sufficing problem, define the heuristic function
\[ Q_i^t = \mu_i^t + \sigma_i^t \Phi^{-1}\left( 1 - \frac{\delta}{2} \right). \]
We define the \emph{$\delta$-sufficing UCL algorithm} as the algorithm that selects arm $i_t = \argmax_i Q_i^t$ at each decision time $t$. The $\delta$-sufficing UCL algorithm achieves finite cumulative satisficing regret, as guaranteed by the following theorem.

\begin{theorem} \label{thm:deltaSufficing}
Consider the $\delta$-sufficing UCL algorithm with an uninformative prior. The number of times the picked arm $i_t$ is non-satisfying with probability greater than $\delta$ is upper bounded as
\[ n_i^T < \frac{4 \sigma_s^2}{\Delta_i^2} \left( \Phi^{-1}\left( 1 - \frac{\delta}{2} \right) \right)^2 + 1. \]
\end{theorem}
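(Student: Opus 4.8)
The plan is to adapt the proof of Theorem~\ref{thm:UCL}, the only structural difference being that the $\delta$-sufficing heuristic replaces the time-varying exploration quantile $\Phi^{-1}(1-\alpha_t)$ by the \emph{constant} quantile $\beta := \Phi^{-1}(1-\delta/2)$. First I would specialize to the uninformative prior, where $\mu_i^t$ equals the empirical mean reward $\bar m_i^t$ at arm $i$ and $(\sigma_i^t)^2 = \sigma_s^2/n_i^t$, so the agent's posterior for $m_i$ is $\mcN(\mu_i^t,\sigma_s^2/n_i^t)$. Under this posterior the belief that arm $i$ is satisfying is $\Pr{S_t = 1} = \Phi\big((\mu_i^t - \satThresh)/\sigma_i^t\big)$, so the event that the picked arm is ``non-satisfying with probability greater than $\delta$'' becomes the explicit inequality $\mu_i^t < \satThresh + \sigma_i^t\,\Phi^{-1}(1-\delta)$.

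Next I would run the standard selection-rule decomposition. Since the algorithm picks $i_t = \argmax_j Q_j^t$ with $Q_j^t = \mu_j^t + \sigma_j^t\beta$, if a suboptimal arm $i$ is chosen at time $t$ then $Q_i^t \ge Q_{i^*}^t$, and exactly as in \eqref{eq:muiHigh}--\eqref{eq:deltaSmall} this forces at least one of $\mu_i^t \ge m_i + C_i^t$, $\mu_{i^*}^t \le m_{i^*} - C_{i^*}^t$, or $m_{i^*} < m_i + 2C_i^t$, where now $C_j^t = \sigma_j^t\beta$. The third inequality, rewritten with $\sigma_i^t = \sigma_s/\sqrt{n_i^t}$, is precisely $n_i^t < 4\sigma_s^2\beta^2/\Delta_i^2$; because $n_i^t$ strictly increases whenever arm $i$ is chosen, this inequality can hold at only $\lceil 4\sigma_s^2\beta^2/\Delta_i^2\rceil$ of the selections of $i$, which is already the asserted count $4\sigma_s^2\beta^2/\Delta_i^2 + 1$. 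This is also where $\beta$ is constant rather than growing in $t$ becomes essential: unlike in Theorem~\ref{thm:UCL}, no extra $\log T$ factor enters, so the count is finite.

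The decisive step is therefore to show that the picks the theorem counts — those at which arm $i$ is chosen while believed non-satisfying with probability exceeding $\delta$ — lie inside the regime $n_i^t < 4\sigma_s^2\beta^2/\Delta_i^2$ in which the third inequality is available, so that the two previous inequalities (the ``bad'' concentration events) can be excluded from the tally. For this I would combine the selection inequality $Q_i^t \ge Q_{i^*}^t$ with the facts that $i^*$ is optimal and, by feasibility of the threshold ($\satThresh \le m_{i^*}$), satisfying, together with the concentration statements that negate \eqref{eq:muiHigh}--\eqref{eq:muiStarLow}, i.e.\ that the $(1-\delta/2)$-credible intervals of arms $i$ and $i^*$ contain their true means; these turn $Q_i^t \ge Q_{i^*}^t$ into a lower bound on $\mu_i^t$ relative to $\satThresh$ that forces $\Pr{S_t=1}\ge 1-\delta$ once $n_i^t$ has passed the threshold. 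The use of $\Phi^{-1}(1-\delta/2)$ rather than $\Phi^{-1}(1-\delta)$ in the heuristic is exactly what makes this bookkeeping close. Finally I would feed the resulting bound on $n_i^T$ into the definition \eqref{eq:satisficingRegret} of satisficing regret. The main obstacle is this translation: one must carefully argue that the non-vanishing events \eqref{eq:muiHigh}--\eqref{eq:muiStarLow}, which occur with the fixed probability $\delta/2$ and hence are not summable in $t$, are not what is being counted, so that the stated bound is a genuine deterministic (sample-path) bound and not merely a bound on the expectation.
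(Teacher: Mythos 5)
Your skeleton is the paper's: the paper bounds selections of a non-satisfying arm $i$ by noting they require $Q_i^t \geq Q_{i^*}^t$, decomposes this into the three conditions \eqref{eq:muitHighDelta}--\eqref{eq:gapDelta} with the constant quantile $C_i^t = \sigma_i^t\Phi^{-1}(1-\delta/2)$, observes that \eqref{eq:gapDelta} can never hold once $n_i^t > 4\sigma_s^2\bigl(\Phi^{-1}(1-\delta/2)\bigr)^2/\Delta_i^2 = \eta$, and notes that \eqref{eq:muitHighDelta} and \eqref{eq:muiStartLowDelta} each hold with probability at most $\delta/2$. Up to that point your proposal and the paper coincide, including the observation that the constant quantile is what removes the $\log T$ factor.

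The divergence, and the gap, is in your ``decisive step.'' The paper does \emph{not} prove a deterministic sample-path bound on the residual selections; it stops at the probabilistic conclusion that for $n_i^t > \eta + 1$ a non-satisfying arm is selected with probability at most $\delta/2 + \delta/2 = \delta$, and that \emph{is} the theorem's claim: the ``number of times the picked arm is non-satisfying with probability greater than $\delta$'' is the count of selections made before the per-step selection probability of a non-satisfying arm drops to $\delta$, namely at most $\eta + 1$. Your attempt to instead exclude the concentration-failure picks from the tally on a sample-path basis does not close. If arm $i$ is picked with $n_i^t > \eta$, then by your own decomposition one of \eqref{eq:muitHighDelta} or \eqref{eq:muiStartLowDelta} must have occurred (there is no pick in that regime with both credible intervals intact, so there is nothing for the credible intervals to ``turn into'' a lower bound on $\mu_i^t$). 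On such a failure event, \eqref{eq:muitHighDelta} only gives $\mu_i^t \geq m_i + \sigma_i^t\Phi^{-1}(1-\delta/2)$ with $m_i < \satThresh$, which does not imply $\mu_i^t \geq \satThresh + \sigma_i^t\Phi^{-1}(1-\delta)$; and the Problem-3 policy never consults $\satThresh$, so no mechanism forces the posterior to certify satisfaction at the moment of selection. These failure events occur with fixed probability $\delta/2$ at every step and are indeed not summable, exactly as you worry in your last paragraph. The resolution is not to argue them out of the count but to accept the weaker, probabilistic reading of the statement, which is what the paper proves and what suffices for the finite-regret discussion following the theorem.
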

\begin{proof}
We bound $n_i^T$ by noting that a non-satisfying arm $i$ is picked only if $Q_i^t \geq Q_{i^*}^t$, which can be decomposed as in the proof of Theorem \ref{thm:satisficingInTheMean} into the three conditions
\begin{align}
\mu_i^t & \geq m_i + C_i^t \label{eq:muitHighDelta} \\
\mu_{i^*}^t & \leq m_{i^*} - C_{i^*}^t \label{eq:muiStartLowDelta} \\
m_{i^*} & < m_i + 2 C_i^t. \label{eq:gapDelta}
\end{align}

\eqref{eq:gapDelta} is equivalent to 
\[ \Delta_i = m_{i^*} - m_i < 2 C_i^t = \frac{2 \sigma_s}{\sqrt{n_i^t}} \Phi^{-1}(1-\delta/2). \]
Squaring  and rearranging, we see that this never holds if 
\[ n_i^t > \frac{4 \sigma_s^2}{\Delta_i^2} \left( \frac{\log(1/\delta)}{\log 2} + 1 \right) > \frac{4 \sigma_s^2}{\Delta_i^2} \left( \Phi^{-1}(1-\delta/2) \right)^2 = \eta. \]

The same argument as in the proof of Theorem \ref{thm:satisficingInTheMean} shows that for $n_i^t \geq 1$,  \eqref{eq:muitHighDelta} and \eqref{eq:muiStartLowDelta} each hold with probability at most $\delta/2$. Therefore, for $n_i^t > \eta + 1$, a non-satisfying arm is selected with probability at most $\delta$.
\end{proof}

Theorem \ref{thm:deltaSufficing} guarantees that the $\delta$-sufficing UCL algorithm achieves finite regret. Furthermore, the algorithm is efficient in that the regret matches the dependence on $\epsilon$ and $\delta$ in the bound \eqref{eq:deltaSufficingRegretBound}. To see this, note that a non-satisfying arm $i$ with $\Delta_i$ is an $\epsilon = \Delta_i$-suboptimal arm, so Corollary \ref{cor:deltaSufficingRegretBound} implies that $n_i^T$ is lower bounded by $\bigO{\log(1/\delta)/\epsilon^2}$. The statement of Theorem \ref{thm:deltaSufficing} combined with the bound \eqref{eq:QDown} on the inverse Gaussian cdf implies that $n_i^T$ is upper bounded by $8 \sigma_s^2 \log(2/\delta)/\Delta_i^2 + 1 = 8\sigma_s^2 \log(2/\delta)/\epsilon^2 + 1$, which matches the lower bound \eqref{eq:deltaSufficingRegretBound} up to constant factors.

\subsection{Problem 4: $(\satThresh,\delta)$-satisficing UCL algorithm}
\label{prob4alg}
A third modification of the UCL algorithm achieves finite satisficing regret in the Gaussian $(\satThresh,\delta)$-satisficing problem, which is the satisficing-in-mean-reward multi-armed bandit with $\satThresh \le m_{\sigma(2)}$ and $\delta \in (0,1]$ (Problem 4). For the agent, this can be thought of as wanting to have finite confidence that it has found an arm whose mean reward is above a known threshold. For the $(\satThresh,\delta)$-satisficing problem, define the heuristic function
\[ Q_i^t = \mu_i^t + \sigma_i^t \Phi^{-1} \left( 1- \frac{\delta}{3} \right). \]
Let the eligible set at time $t$ be $\{ i \;|\; Q_i^t \geq \satThresh \}$. 
We define the \emph{$(\satThresh,\delta)$-satisficing UCL algorithm} as the algorithm that selects arm $i_t \in \{ i | Q_i^t \geq \satThresh \}$, if the eligible set at time $t$ is non-empty. Otherwise, if the eligible set is empty, the algorithm picks the arm with maximal $Q_i^t$.

The $(\satThresh,\delta)$-satisficing UCL algorithm achieves efficient performance  as guaranteed by the following theorem.
\begin{theorem} \label{thm:MdeltaSatisficing}
Consider the $(\satThresh,\delta)$-satisficing UCL algorithm with an uninformative prior. The number of times the picked arm $i_t$ is non-satisfying with probability greater than $\delta$ is upper bounded as
\[ n_i^T < \frac{4 \sigma_s^2}{\left( \Delta_i^{\satThresh} \right)^2} \left( \Phi^{-1}\left( 1 - \delta/3 \right) \right)^2 + 1. \]
\end{theorem}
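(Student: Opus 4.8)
The plan is to mimic the proof of Theorem \ref{thm:satisficingInTheMean}, but to track the event ``$i_t = i$ and $i$ is non-satisfying with probability greater than $\delta$'' rather than simply ``$i_t = i$.'' First I would note that since the heuristic uses the quantile $\Phi^{-1}(1-\delta/3)$, the Bayesian upper credible limit $Q_i^t$ is a $(1-\delta/3)$-upper confidence bound; in particular, for the uncorrelated uninformative prior, conditional on $n_i^t$ we have $Q_i^t = \bar m_i^t + (\sigma_s/\sqrt{n_i^t})\Phi^{-1}(1-\delta/3)$, and the three ``bad'' events analogous to \eqref{eq:muiHighThresh}--\eqref{eq:deltaSmallThresh} govern when a non-satisfying arm can enter the eligible set or beat the optimal arm. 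The crucial observation is that when arm $i$ is non-satisfying \emph{and} the agent's posterior assigns probability greater than $\delta$ to its being satisfying, the posterior must be sufficiently ``optimistic'' — i.e. a concentration event for $\bar m_i^t$ must have occurred — and the $\delta/3$ split leaves enough room to absorb the two one-sided tail events plus the gap condition.

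The key steps, in order, are: (1) Decompose the event that arm $i$ is picked while being non-satisfying-with-confidence-worse-than-$\delta$ into the cases (a) $i$ is in the eligible set, which forces either $\mu_i^t \geq m_i + C_i^t$ or $\satThresh < m_i + 2C_i^t$ as in \eqref{eq:muiHighThresh}--\eqref{eq:deltaSmallThresh}, and (b) the eligible set is empty but $Q_i^t \geq Q_{i^*}^t$, which forces one of the three conditions \eqref{eq:muitHighDelta}--\eqref{eq:gapDelta} with $C_i^t = (\sigma_s/\sqrt{n_i^t})\Phi^{-1}(1-\delta/3)$. (2) Handle the deterministic gap conditions: \eqref{eq:deltaSmallThresh}-type inequality $\Delta_i^{\satThresh} < 2C_i^t$ fails once $n_i^t > (4\sigma_s^2/(\Delta_i^{\satThresh})^2)(\Phi^{-1}(1-\delta/3))^2 =: \eta$, and since $\Delta_i^{\satThresh} \leq \Delta_i$ (as $m_{i^*} \geq \satThresh$), the gap condition \eqref{eq:gapDelta} also fails for such $n_i^t$; this is exactly the $\eta$ appearing in the statement. (3) For the remaining (stochastic) conditions $\mu_i^t \geq m_i + C_i^t$, $\mu_{i^*}^t \leq m_{i^*} - C_{i^*}^t$, use the same reduction to a standard normal $z$ as in the proof of Theorem \ref{thm:satisficingInTheMean} to show each holds with probability at most $\delta/3$. (4) Conclude that once $n_i^t > \eta + 1$, the arm $i$ is selected-while-non-satisfying-with-confidence-worse-than-$\delta$ only if a probability-$\leq \delta/3$ event occurs in case (a) or one of two probability-$\leq \delta/3$ events occurs in case (b); in either situation the total probability of the relevant event is at most $\delta$, so the arm is not counted. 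Hence $n_i^T < \eta + 1$, which is the claimed bound.

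The main obstacle I anticipate is making precise step (4): specifically, showing that the posterior-confidence condition ``$\Pr{S_t = 1} > \delta$'' (with the probability over the agent's current beliefs, per the definition preceding \eqref{eq:satisficing-in-mean-obj}) is implied by — or consistent with — one of the frequentist tail events above, so that the $\delta/3$ allocations actually cover it. One must argue that if $i$ is non-satisfying yet the posterior probability that $m_i \geq \satThresh$ exceeds $\delta$, then $\bar m_i^t$ (equivalently $\mu_i^t$) is atypically large in a way captured by \eqref{eq:muiHighThresh}/\eqref{eq:muitHighDelta}, or $n_i^t$ is small enough that \eqref{eq:deltaSmallThresh}/\eqref{eq:gapDelta} holds — i.e. that outside the regime $n_i^t > \eta+1$ combined with the non-occurrence of all tail events, the posterior cannot be that confident. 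This bookkeeping — reconciling the $\delta$-confidence count in the theorem statement with the union bound over the at-most-three sub-events, and verifying the factor-of-three split is the right one — is the delicate part; the rest is a routine repetition of the Auer–Cesa-Bianchi–Fischer / UCL argument using Lemma \ref{thm:tailBounds}.
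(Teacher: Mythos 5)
Your proposal is correct and follows essentially the same route as the paper's proof: the same decomposition into the eligible-set event (two sub-conditions) and the empty-eligible-set event (three sub-conditions), the same deterministic elimination of the gap conditions once $n_i^t$ exceeds $\eta = \frac{4\sigma_s^2}{(\Delta_i^{\satThresh})^2}\bigl(\Phi^{-1}(1-\delta/3)\bigr)^2$ using $\Delta_i^{\satThresh}\leq\Delta_i$, and the same $\delta/3$ tail bounds combined by a union bound into a total of $\delta$. The reconciliation of the frequentist tail events with the posterior-confidence phrasing that you flag as the delicate part is handled no more explicitly in the paper than in your sketch, so there is no gap relative to the published argument.
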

\begin{proof}
The proof is very similar to the proofs of Theorems \ref{thm:satisficingInTheMean} and \ref{thm:deltaSufficing}. As in Theorem \ref{thm:satisficingInTheMean}, we bound $n_i^T$ by 
\begin{align*}
n_i^T &= \sum_{t=1}^T \boldsymbol 1(i_t = i) \\
	&\leq \eta \! + \! \sum_{t=1}^T \left(\indicator{Q_i^t \geq \satThresh, n_i^t \geq \eta}
	 \! + \! \indicator{Q_i^t \geq Q_{i^*}^t, n_i^t \geq \eta}\right).
\end{align*}
The condition $Q_i^t \geq \satThresh$, which means arm $i$ is in the eligible set, can be decomposed into the two conditions
\begin{align}
\mu_i^t & \geq m_i + C_i^t  \label{eq:MdeltaCondition1} \\
\satThresh & < m_i + 2 C_i^t. \label{eq:MdeltaCondition2}
\end{align}
 \eqref{eq:MdeltaCondition2} is equivalent to
\[ \Delta_i^{\satThresh} = \satThresh - m_i < 2 C_i^t = \frac{2 \sigma_s}{\sqrt{n_i^t}} \Phi^{-1}(1-\delta/3). \]
Squaring and rearranging, we see that  \eqref{eq:MdeltaCondition2} never holds if 
\begin{align*} n_i^t 
 &> \frac{4 \sigma_s^2}{\left(\Delta_i^{\satThresh} \right)^2} \left( \Phi^{-1}(1-\delta/3) \right)^2 = \eta.
\end{align*}
The same argument as in the proof of Theorem \ref{thm:deltaSufficing} shows that for $n_i^t \geq 1$,  \eqref{eq:MdeltaCondition1} holds with probability at most $\delta/3$, so $n_i^t > \eta$ implies that a non-satisfying arm is in the eligible set with probability at most $\delta/3$.

As in the proof of Theorem \ref{thm:deltaSufficing}, a non-satisfying arm $i$ is picked due to the eligible set being empty only if $Q_i^t \geq Q_{i^*}^t$, where $i^*$ is the arm with maximal mean reward. This condition can again be decomposed into the three conditions \eqref{eq:muitHighDelta}--\eqref{eq:gapDelta}.  \eqref{eq:gapDelta} does not hold if $n_i^t > \eta$, so the probability that $Q_i^t \geq Q_{i^*}^t$ is bounded by the probability that either \eqref{eq:muitHighDelta} or \eqref{eq:muiStartLowDelta} holds. For $n_i^t > 1$, each of these  holds with probability $\delta/3$, so the probability of a non-satisfying arm being chosen due to the eligible set being empty is at most $2\delta/3$. Thus, for $n_i^t > \eta + 1$, a non-satisfying arm is selected with probability at most $\delta$.
\end{proof}

Theorem \ref{thm:MdeltaSatisficing} guarantees that the $(\satThresh,\delta)$-satisficing UCL algorithm achieves finite regret. Furthermore, the algorithm is efficient in that the regret matches the dependence on $\epsilon$ and $\delta$ in the bound \eqref{eq:MdeltaSufficingRegretBound}. Applying the bound \eqref{eq:QDown} on the inverse Gaussian cdf to the statement in the theorem, we see that $n_i^T$ is upper bounded by $8 \sigma_s^2 \log(3/\delta)/\left(\Delta_i^{\satThresh}\right)^2$. Summing this bound over non-satisfying arms $i$ shows that the total number of times the algorithm incurs regret is at most $8 \sigma_s^2 \log(3/\delta) \sum_{\{ i | \Delta_i^{\satThresh}> 0 \}} 1/\left( \Delta_i^{\satThresh} \right)^2$. This matches the dependence on $\epsilon$ and $\delta$ in the bound \eqref{eq:MdeltaSufficingRegretBound} up to constant factors.

Note that lower bound \eqref{eq:MdeltaSufficingRegretBound} counts the number of selections of all arms including the optimal arm, while the upper bound counts only the suboptimal arms. Hence, we can only claim that we achieve cumulative regret bounded in $T$. With a better lower bound on $n_i^T$, we may be able to claim that, similar to $\delta$-sufficing UCL, $(\satThresh, \delta)$-sufficing UCL achieves the optimal dependence on $\epsilon$ and $\delta$. However, this remains an open problem to pursue.

\subsection{Robust satisficing UCL algorithms}
The UCL algorithm solves Problem 1, the Gaussian standard problem. The  modified versions of the UCL algorithm in Sections~\ref{prob2alg}, \ref{prob3alg}, and \ref{prob4alg}  solve the other three Gaussian satisficing-in-mean-reward Problems 2--4. All four UCL algorithms achieve efficient performance in solving their respective problems, as guaranteed by Theorems~\ref{thm:satisficingInTheMean}, \ref{thm:deltaSufficing}, and \ref{thm:MdeltaSatisficing}.

The equivalence result of Lemma \ref{lem:equivalence} shows for Gaussian distributed rewards that  we can modify the four UCL algorithms developed for Problems 1--4 to solve Problems 5--8 as follows. The modified UCL algorithms make decisions based on the standardized mean reward \eqref{eq:defX} using priors on the standardized mean rewards. A prior belief $\bfm \sim \mcN(\bfmu_0,\Sigma_0)$ on the mean rewards $\bfm$ is transformed into a prior belief on the standardized mean rewards $\bfx \sim \mcN(\tilde{\bfmu}_0,\tilde{\Sigma}_0)$ by 
\[(\tilde{\bfmu}_0)_i = ((\bfmu_0)_i-M)/\sigma_{s,i}, \ (\tilde{\Sigma}_0)_{ij} = (\Sigma_0)_{ij}/(\sigma_{s,i} \sigma_{s,j}). \]

\subsubsection{Problem 5: Robust UCL algorithm}
The \emph{robust UCL algorithm} is the UCL algorithm where the prior is given in terms of the standardized mean rewards, and the observed reward $r_t$ is standardized according to the transformation \eqref{eq:standardize} before being input to the inference equations \eqref{eq:inference}.

\subsubsection{Problem 6: Robust satisfaction UCL algorithm}
The \emph{robust satisfaction UCL algorithm} is the satisfaction-in-mean-reward UCL algorithm where the prior is given in terms of the standardized mean rewards, the observed reward $r_t$ is standardized according to the transformation \eqref{eq:standardize} before being input to the inference equations \eqref{eq:inference}, and the parameter $\satThresh$ is set equal to $X = \Phi^{-1}(\Pi)$ defined in \eqref{eq:thresholdX}.

\subsubsection{Problem 7: $\delta$-robust sufficing}
The \emph{$\delta$-robust sufficing UCL algorithm} is the $\delta$-sufficing UCL algorithm where the prior is given in terms of the standardized mean rewards, and the observed reward $r_t$ is standardized according to the transformation \eqref{eq:standardize} before being input to the inference equations \eqref{eq:inference}.

\subsubsection{Problem 8: $(\Pi, \delta)$-robust sufficing}
The \emph{$(\Pi, \delta)$-robust sufficing UCL algorithm} is the $(\satThresh, \delta)$-satisficing UCL algorithm where the prior is given in terms of the standardized mean rewards, the observed reward $r_t$ is standardized according to the transformation \eqref{eq:standardize} before being input to the inference equations \eqref{eq:inference}, and the parameter $\satThresh$ is set equal to $X = \Phi^{-1}(\Pi)$ defined in \eqref{eq:thresholdX}.

Lemma \ref{lem:equivalence} implies that the performance guarantees that hold for the UCL algorithms developed for Problems 1--4 also hold for the four new UCL algorithms defined above when applied to Problems 5--8.

\subsection{Relaxations of  Gaussian and known variance assumptions}
The algorithms presented so far have been developed assuming that the reward distribution associated with each arm $i$ is Gaussian with unknown mean $m_i$ and known variance $\sigma_{s,i}^2$. The reward variance may be known, e.g., estimated from known sensor characteristics or prior data. When the reward variance is not known, a simple modification to the heuristic \eqref{eq:Q} yields an algorithm that achieves efficient performance. Similar simple modifications extend our results to the case where the reward distribution is sub-Gaussian, which includes distributions with bounded support. We state modifications for the case of an uninformative prior. Prior information can be incorporated using a conjugate prior, as discussed in \cite{KPM:07}.

\begin{remark}[Gaussian rewards with unknown variance]
\label{remark-unknown-variance}
When the reward distribution is Gaussian with unknown variance, the heuristic developed by Auer~\etal~\cite{PA-NCB-PF:02} for their algorithm UCB1-NORMAL results in algorithms that achieve efficient performance. Recall that $n_i^t$ is the number of times arm $i$ has been selected up to time $t$, and $\bar{m}_i^t$ is the empirical mean reward observed at arm $i$ up to time $t$. Define
\[ q_i^t = \sum_{t | i_t = i} r_t^2 \]
as the sum of the squared rewards obtained from arm $i$.

The UCB1-NORMAL algorithm is composed of two rules: if there is an arm that has been played less than $\lceil 8 \log t \rceil$ times, it selects that arm. Otherwise it selects the arm $i$ that maximizes the heuristic
\[ Q_i^{t, \mathrm{UCB1-NORMAL}} = \bar{m}_i^t + \sqrt{16 \frac{q_i^t - n_i^t (\bar m_i^t)^2}{n_i^t-1} \frac{\log t}{n_i^t}}. \]
This heuristic can be used directly in the standard  and satisfaction-in-mean-reward UCL algorithms. For the $\delta$-sufficing and $(\satThresh, \delta)$-satisficing UCL algorithms, use $\tilde{k} = 2$ and $\tilde{k} = 3$, respectively, in the heuristic
\begin{align*} Q_i^{t} &= \bar{m}_i^t + \sqrt{4 \frac{q_i^t - n_i^t (\bar m_i^t)^2}{n_i^t-1} \frac{\log (\tilde{k}/\delta)}{n_i^t}}.
\end{align*}

The Gaussian distribution with unknown mean and variance is again a location-scale family, so Lemma \ref{lem:equivalence} implies that these modified algorithms can be used to solve the robust satisficing problems as well.

Prior information can be incorporated by means of a conjugate prior, as discussed in \cite{KPM:07}.
\end{remark}

\begin{remark}[Sub-Gaussian rewards]
Another generalization of Gaussian rewards with known variance is the case where the reward distribution is sub-Gaussian, also known as light-tailed. The distribution of a random variable $X$ is called sub-Gaussian if its moment generating function $M(u) = \E{\exp(u X)}$ is finite for all $u \in \bbR$. Then, one can find a constant $\zeta$ such that $M(u) \leq \exp(\zeta u^2/2)$ \cite{PC-OC-SK:06}.

In this case, a heuristic function due to Liu and Zhou \cite{KL-QZ:11}
\[ Q_i^{t, \mathrm{SG}} = \bar{m}_i^t + \sqrt{\frac{8 \zeta \log t}{n_i^t}} \]
 can be used to achieve efficient performance.
\end{remark}

\begin{remark}[Reward distributions with bounded support]
Another common assumption in the bandit literature is that the reward distributions are arbitrary but have a known bounded support $[a, b] \subset \bbR$. Without loss of generality, we assume that the support is contained in the unit interval $[0,1]$. In this case the UCB1 heuristic due to Auer \etal~\cite{PA-NCB-PF:02}
\[ Q_i^{t, \mathrm{UCB1}} = \bar{m}_i^t + \sqrt{\frac{2 \log t}{n_i^t}} \]
can be used in the standard and satisfaction-in-mean-reward UCL algorithms.

For the $\delta$-sufficing and $(\satThresh, \delta)$-satisficing UCL algorithms,  use $\tilde k = 2$ and $\tilde k = 3$, respectively, in the heuristic
\begin{align*} Q_i^{t} &= \bar{m}_i^t + \sqrt{ \frac{\log(\tilde k/\delta)}{2 n_i^t} }. \end{align*}

For the robust satisficing problems the relevant reward, happiness $h_t$ \eqref{eq:happiness}, is a Bernoulli random variable which is supported on $[0,1]$. Therefore, each robust satisficing problem can be solved by the appropriate variant of UCB1. However, if additional information is available about the distribution of the raw rewards $r_t$, e.g., that they are Gaussian with known variance, then the robust UCL algorithms can achieve improved performance relative to UCB1, for example if the Kullback-Leibler divergence between the $r_t$ distributions is larger than the Bernoulli distributions associated with $h_t$.

Additional extensions to heavy-tailed distributions may be possible using the techniques of \cite{Bubeck2013}.
\end{remark}

\section{Numerical examples}
In this section, we present the results of numerical simulations of the modified UCL algorithms solving multi-armed bandit problems with Gaussian rewards and satisficing objectives.  We consider both thresholding in the mean rewards $m_i$, as in Problems 1--4 (Table \ref{tab:thresholdm}), and thresholding in the instantaneous rewards, as in Problems 5--8 (Table \ref{tab:thresholdr}). In all of the cases presented, the algorithms used an uninformative prior.   We use the simulations to illustrate performance of the algorithms relative to the bounds proved in the theorems of Section IV.   We also use the simulations to compare how the different algorithms trade off accumulation of reward with reduction in exploration cost as measured by number of switches among arms. As shown in the figures, satisficing can significantly decrease the exploration cost while incurring little cost in terms of the rewards received by the agent.

We first consider the satisficing objectives with thresholding in the mean rewards.  We illustrate how the objectives of Problems 1 and 2 yield logarithmic regret  (Figure \ref{fig:satisficingWithCertainty}) whereas the objectives of Problems 3 and 4 yield finite regret (Figure \ref{fig:sufficingInTheMean}), as predicted by the bounds proved in Theorems \ref{thm:UCL}, \ref{thm:satisficingInTheMean}, \ref{thm:deltaSufficing} and \ref{thm:MdeltaSatisficing}. For the simulations presented in Figures \ref{fig:satisficingWithCertainty} and \ref{fig:sufficingInTheMean}, we set $N=4$. The mean rewards $\bfm$ were set equal to $[1\ 2\ 3\ 4]$ and the standard deviations $\sigma_{s,i}$ were each set equal to $1$.

In Figure \ref{fig:satisficingWithCertainty}, the agent's regret is defined  by comparing the mean rewards $m_i$ with the satisfaction threshold $\satThresh$. For the standard objective (Problem 1) the satisfaction level $\satThresh$ was set equal to $m_{i^*} = 4$, so the agent incurred regret if it selected any arm other than $i^* = 4$. For the satisfaction-in-mean-reward objective (Problem 2) the satisfaction level $\satThresh$ was set equal to 2.5, so the agent  incurred regret if it selected arms 1 or 2. Figure \ref{fig:satisficingWithCertainty} plots mean cumulative regret from 100 simulations (solid lines) and the bounds on regret from Theorems \ref{thm:UCL} and \ref{thm:satisficingInTheMean} (dashed lines) for the standard and satisfaction-in-mean-reward UCL algorithms, respectively. We observe from Figure \ref{fig:satisficingWithCertainty} that the algorithms' regret is significantly below the bounds, indicating that both bounds are conservative. Because both objectives set the sufficiency threshold $\delta=0$ and define regret in terms of the unknown mean rewards $m_i$, the agent must achieve certainty about the mean reward values to stop incurring regret. It is impossible for the agent to achieve this certainty in finite time, so the mean regret and its bound both increase indefinitely at a logarithmic rate for both objectives.

Figure \ref{fig:sufficingInTheMean}, in contrast, shows that by setting the sufficiency threshold $\delta$ to a non-zero value, one can achieve finite regret. All the parameters for the simulations shown in Figure \ref{fig:sufficingInTheMean} were identical to those for the simulations in Figure \ref{fig:satisficingWithCertainty}, except that the sufficiency threshold $\delta$ was set equal to 0.05. Setting $\delta$ to a non-zero value transformed the standard and satisfaction-in-mean-reward objectives into the $\delta$-sufficing and $(\satThresh, \delta)$-satisficing objectives, Problems 3 and 4, respectively. For these objectives, regret is again defined by thresholding the  mean rewards $m_i$. However, rather than seeking certainty that the threshold is met, the agent only seeks to ensure that its threshold is met with a probability of at least $1-\delta$. Because of the allowance of uncertainty, the agent only needs to perform a finite amount of exploration before settling on an arm that appears satisfying. The regret bounds in the figure follow from Theorems \ref{thm:deltaSufficing} and \ref{thm:MdeltaSatisficing} for the $\delta$-sufficing and $(\satThresh, \delta)$-satisficing objectives, respectively. As in Figure \ref{fig:satisficingWithCertainty}, we observe from Figure \ref{fig:sufficingInTheMean} 
that both bounds are conservative.

The reduction in exploring that comes with sufficing can be advantageous when exploring is costly, e.g., when there is a cost associated with making a switch from one arm to another.  Figure \ref{fig:explorationSavings} suggests that this reduction in cost may require little sacrifice in reward. The upper panel plots mean cumulative reward from 100 simulations for both the standard bandit problem 1 and the $\delta$-sufficing problem 3. The curve for $\delta$-sufficing is slightly below that for the standard bandit problem, showing that it results in slightly lower cumulative rewards, but the difference is insignificant in comparison to the overall magnitude of the cumulative rewards. The lower panel plots the mean cumulative number of switches between arms for both algorithms and shows that $\delta$-sufficing requires roughly half as much exploration.

\begin{figure}[ht]
\centering
\includegraphics[trim=3mm 2mm 3mm 13mm, clip=true, width=3.6in]{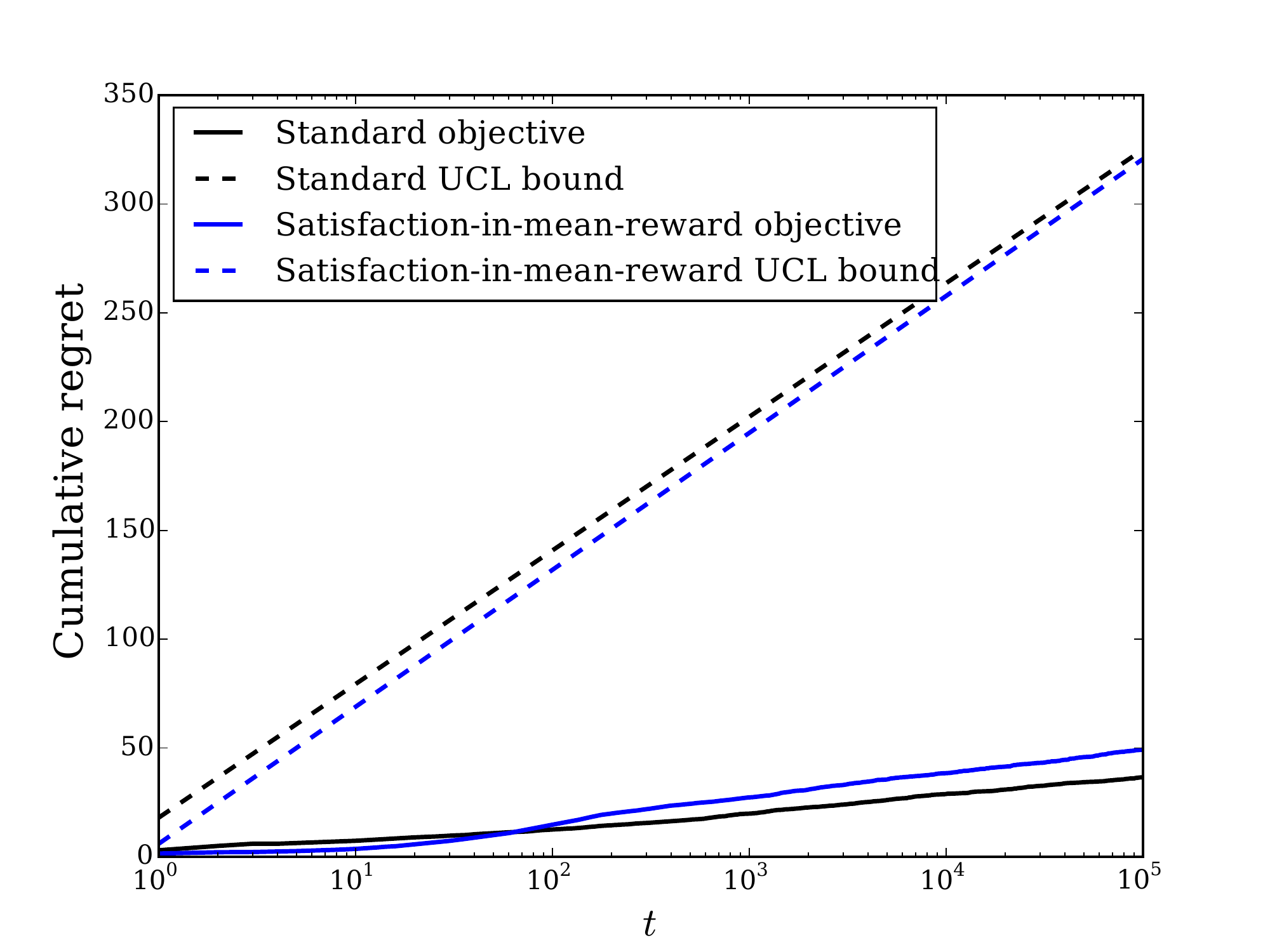}
\caption{Comparison of regret incurred by the UCL algorithms when solving the standard  problem (Problem 1) and  satisfaction-in-mean-reward problem (Problem 2). Both problems define regret by thresholding  mean reward values; the standard bandit objective incurs regret when the mean reward of the chosen option is less than the maximum reward $m_{i^*}$, while the satisfaction-in-mean-reward problem incurs regret when the mean reward is less than $\satThresh \le m_{\sigma(2)}$, here set equal to 2.5. For both problems, the cumulative expected regret and its upper bound increase at a logarithmic rate since the agent seeks certainty that its threshold is met, which it cannot achieve in finite time.}
\label{fig:satisficingWithCertainty}
\end{figure}

\begin{figure}[ht]
\centering
\includegraphics[trim=3mm 2mm 3mm 13mm, clip=true, width=3.6in]{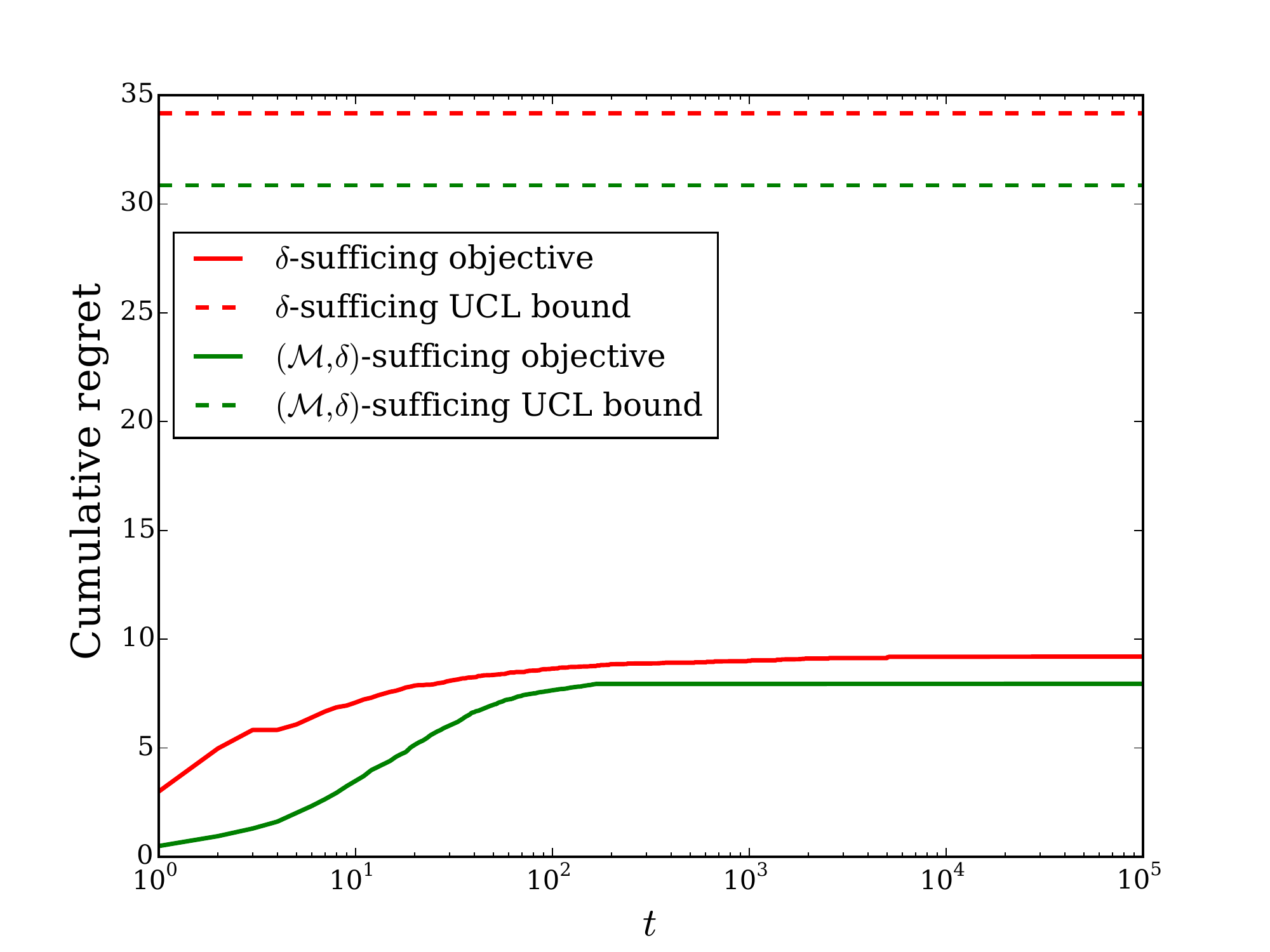}
\caption{Comparison of regret incurred by the UCL algorithms when solving the $\delta$- and $(\satThresh,\delta)$-satisficing problems, (Problems 3 and 4, respectively). As in Figure \ref{fig:satisficingWithCertainty}, the problems define regret by thresholding the mean reward values; the $\delta$-sufficing objective incurs regret when the mean reward of the chosen option  is less than the maximum reward $m_{i^*}$, while the $(\satThresh,\delta)$-sufficing problem incurs regret when the mean reward is less than $\satThresh \le m_{\sigma(2)}$, here set equal to 2.5. In contrast to Figure \ref{fig:satisficingWithCertainty},  the agent only seeks to have $1-\delta = 95\%$ confidence that its threshold is met, which it can achieve in finite time. Thus, the upper bounds on cumulative expected regret are constant functions of horizon length and the mean regret  plateaus at a finite value.}
\label{fig:sufficingInTheMean}
\end{figure}

We next consider the satisficing objectives with thresholding in the instantaneous rewards. Figure \ref{fig:satisficingAtEachTime} presents a simulation that demonstrates the equivalence result of Lemma \ref{lem:equivalence} for the robust bandit  (Problem 5)  using the robust UCL algorithm. The happiness threshold $M$ was set equal to 2. As in Figures \ref{fig:satisficingWithCertainty} and \ref{fig:sufficingInTheMean}, the mean rewards $\bfm$ were set equal to $[1\ 2\ 3\ 4]$, but for this simulation the standard deviations were set equal to $[1\ 1\ 1\ 3]$.   So the standardized mean rewards were $\bfx = [-1\ 0\ 1\ \tfrac{2}{3}]$ and $i^* = 3$ the optimal arm, i.e., the arm with maximal happiness probability. Figure \ref{fig:satisficingAtEachTime} shows mean cumulative regret from 100 simulations (solid line) and the regret bound (dashed line) implied by Theorem \ref{thm:UCL} and the definition \eqref{eq:satisfactionR} of satisfaction. Because the objective requires identifying the arm with highest probability of satisfaction with certainty, both the regret and its upper bound increase indefinitely at a logarithmic rate, as in the problems illustrated in Figure \ref{fig:satisficingWithCertainty}.

\vspace{-1mm}
\begin{figure}[ht]
\centering
\includegraphics[trim=3mm 2mm 3mm 13mm, clip=true, width=3.6in]{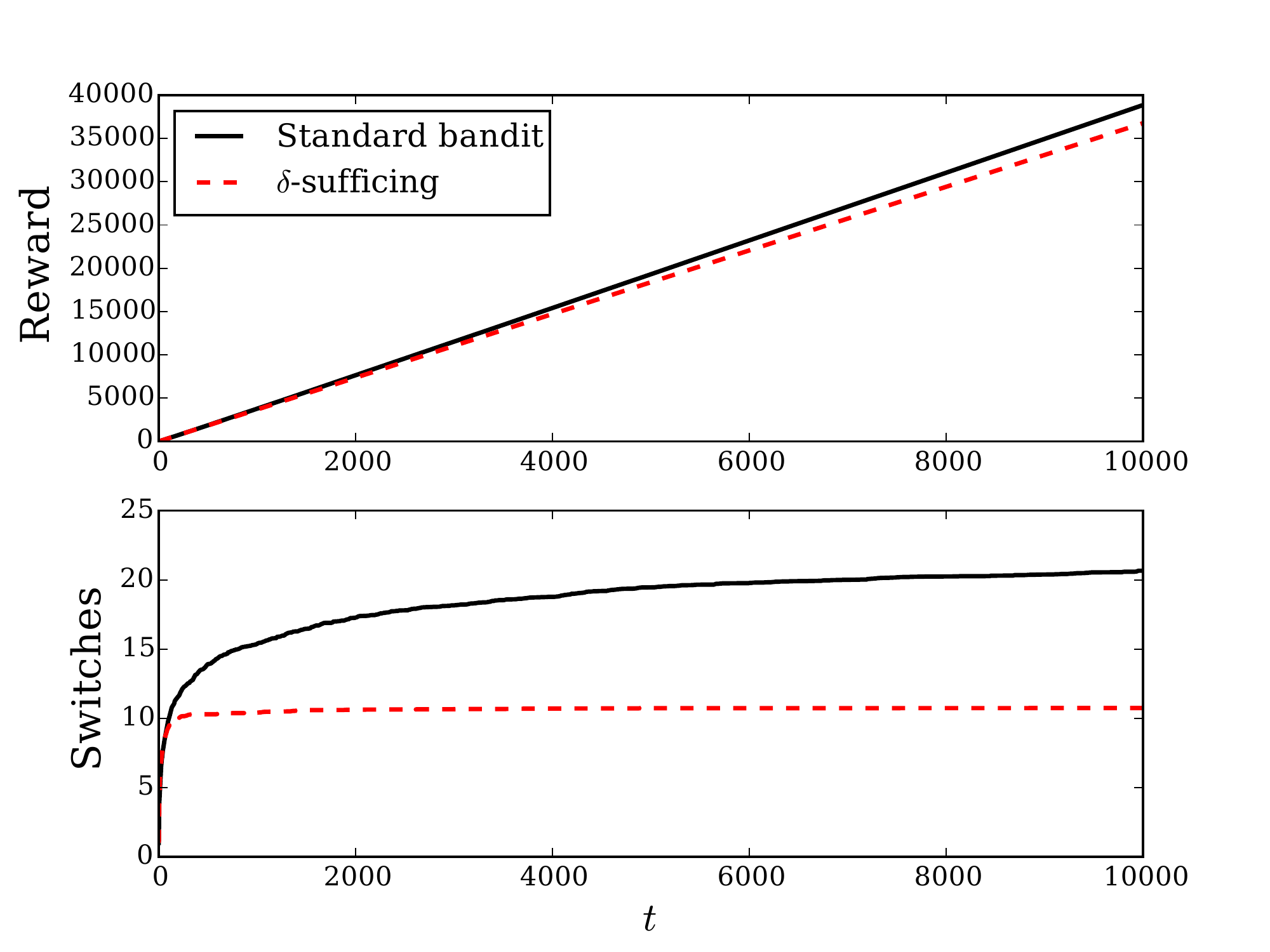}
\caption{Upper: Mean cumulative reward accrued by  standard  (Problem 1) and $\delta$-sufficing  (Problem 3) UCL algorithms. Lower: Mean cumulative number of switches between arms, quantifying the algorithms' exploration costs. The $\delta$-sufficing UCL algorithm achieves nearly the same cumulative rewards as the standard UCL algorithm, but with roughly half the exploration cost.}
\vspace{-2.5mm}
\label{fig:explorationSavings}
\end{figure}

\begin{figure}[ht]
\centering
\includegraphics[trim=3mm 2mm 3mm 13mm, clip=true, width=3.6in]{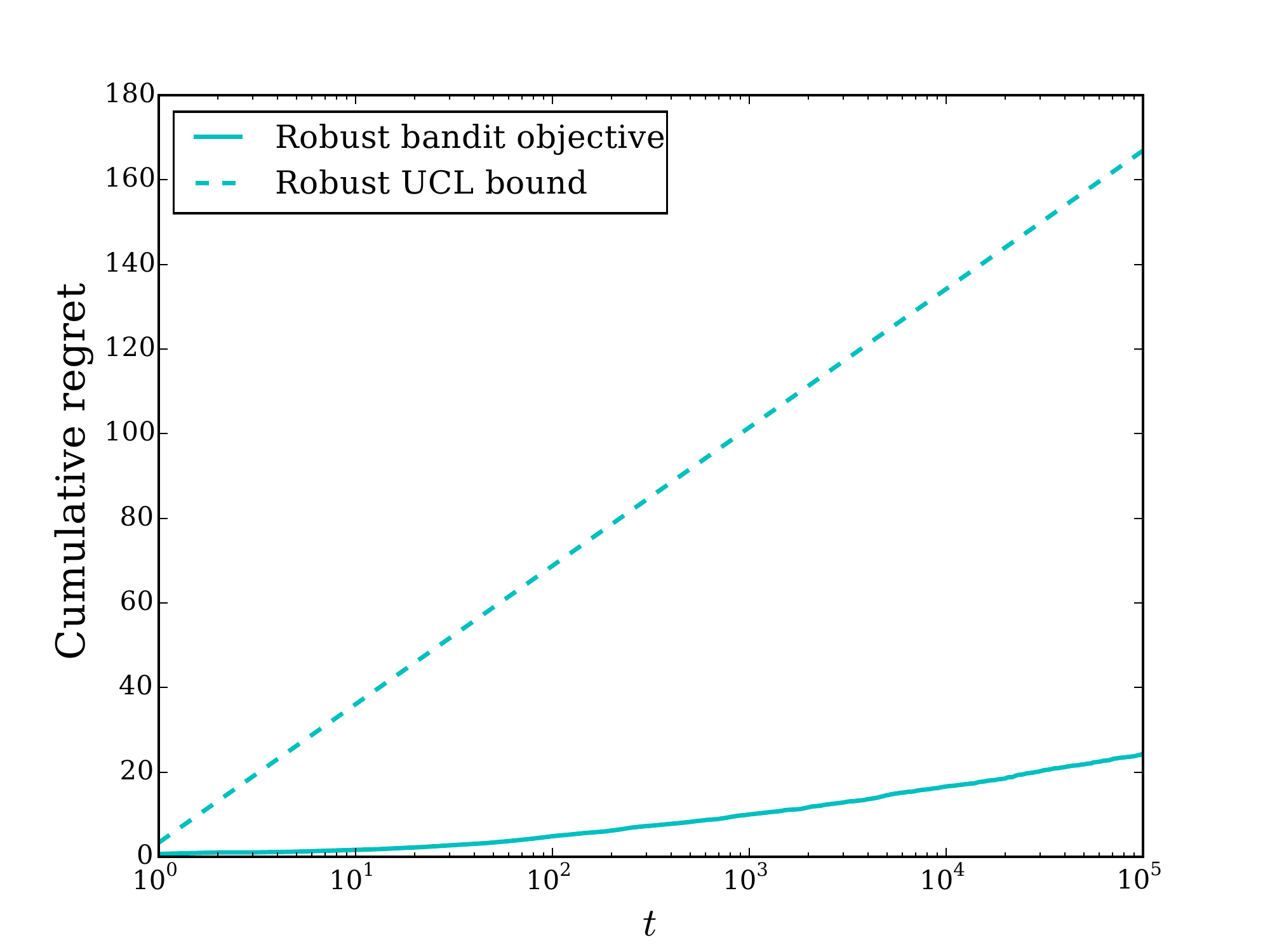}
\caption{Regret incurred by the robust UCL algorithm  (Problem 5). The agent is happy if it receives a reward $r_t$ that is at least equal to $M$, here set equal to 2. The agent seeks to maximize its probability of being happy at each time, so it incurs regret if it chooses an option with less than maximal happiness probability. As in Figure \ref{fig:satisficingWithCertainty}, the agent seeks certainty that it maximizes its happiness probability, which it cannot achieve in finite time.  So it incurs regret that increases indefinitely at a logarithmic rate.}
\vspace{-2.5mm}
\label{fig:satisficingAtEachTime}
\end{figure}

Figure \ref{fig:gainsAndSavings} shows the benefit of combining the robust objective with sufficing in the $\delta$-robust bandit objective (Problem 7). By doing so, it is possible to retain the robustness benefit of the robust bandit objective (Problem 5) relative to the standard bandit objective (Problem 1) while reducing the exploration cost. The parameter values used in the $\delta$-robust UCL algorithm (Problem 7) are the same as in Figure \ref{fig:satisficingAtEachTime}, while the sufficiency parameter $\delta$ was set equal to 0.05. Figure \ref{fig:gainsAndSavings} shows how robust satisficing algorithms (Problems 5 and 7) outperform the standard algorithm (Problem 1) for performance with respect to instantaneous reward as measured by happiness \eqref{eq:happiness}. There is a switching cost associated with achieving the higher rate of cumulative happiness. However, this cost is significantly reduced for the $\delta$-robust algorithm (Problem 7), where sufficing is included, as compared to the robust algorithm (Problem 5) and it approaches the switching cost incurred by the standard UCL algorithm.

\begin{figure}[ht]
\centering
\includegraphics[trim=3mm 2mm 3mm 13mm, clip=true, width=3.6in]{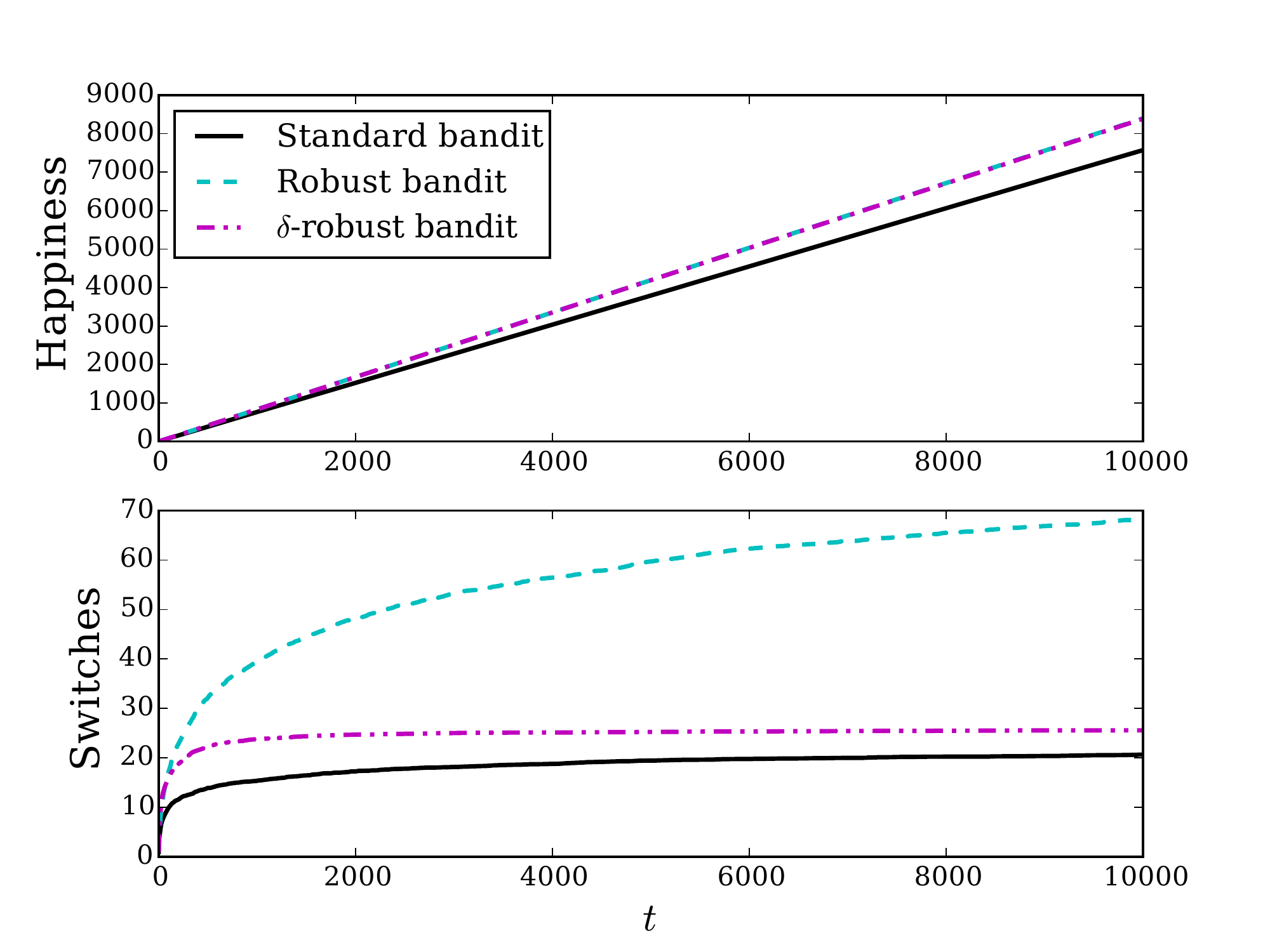}
\caption{Upper: Mean cumulative number of times the agent is happy  (i.e., has, reward $r_t \geq M = 2$, \eqref{eq:happiness}) when using the standard  (Problem 1), robust UCL (Problem 5), and $\delta$-robust  (Problem 7) UCL algorithms. Lower: Mean cumulative number of switches between arms, quantifying the algorithms' exploration costs. The robust bandit objective is more robust in the sense that it is more likely to achieve a reward that is above the threshold $M$, but in this case incurs a greater exploration cost. The robust $\delta$-sufficing objective combines the positive aspects of both the $\delta$-sufficing and robust bandit objectives: exhibiting high robustness in terms of agent happiness and minimizing exploration costs.}
\vspace{-5mm}
\label{fig:gainsAndSavings}
\end{figure}

\section{Conclusion}
Satisficing, the concept of doing well relative to a reference value, is a useful alternative to maximizing that can be applied to a variety of decision-making scenarios. In this paper, we considered the multi-armed bandit problem using satisficing objectives. The multi-armed bandit problem is a canonical decision-making problem that is widely studied in machine learning and adaptive control using a maximization objective.

We proposed a system of eight objectives for stochastic multi-armed bandit problems that generalize the standard multi-armed bandit problem by capturing aspects of satisficing, notably thresholding effects that we termed \emph{satisfaction} and \emph{sufficiency}. We showed that each of the four objectives of Problems 1--4, defined by thresholding the unknown mean reward $m_i$ associated with each arm, is equivalent to a related problem studied in the existing literature. We used these equivalences to derive bounds on efficient performance. For the four objectives of Problems 5--8 defined by thresholding the observed rewards $r_t$, we showed that, when the reward distributions belong to a location-scale family, each objective is equivalent to one of the first four objectives.

We then specialized to the case of Gaussian rewards (a particular location-scale family) and developed four variants of the UCL algorithm \cite{PR-VS-NEL:14} to solve Problems 1--4 defined by thresholding the mean rewards. We analyzed each algorithm and showed that it achieved efficient satisficing performance. We used the equivalency result (Lemma \ref{lem:equivalence}) to show how to apply the four variants of the UCL algorithm to Problems 5--8 defined by thresholding the instantaneous rewards and again achieve efficient satisficing performance.

Satisficing objectives that threshold the mean can reduce exploration costs (and thus risk) as compared to the standard problem where the objective is to maximize expected reward with certainty. Satisficing objectives that threshold  observed rewards can result in more risk-averse and robust algorithms than objectives that account only for mean rewards \cite{PR-NEL:14a}. Risk aversion and robustness are important for engineering applications (where standard bandit algorithms are known to have poor risk-aversion characteristics~\cite{JYA-RM-CS:09}). Thus, our proposed algorithms can be usefully applied to a range of engineering problems, notably those involving design of control policies.

Risk aversion and robustness are also important in the field of optimal foraging theory~\cite{YC-YBH:05}. Foraging has been studied using the multi-armed bandit framework with a maximizing objective~\cite{JRK-AK-PT:78, TK-etal:02, VS-PR-NEL:13}. The satisficing objectives and algorithms that we have proposed in the present paper may provide an even more biologically plausible framework.

For any satisficing problem, selecting the appropriate satisfaction threshold remains an open problem. The results presented here provide an efficient policy once the satisfaction threshold has been chosen but leave the selection of the threshold up to the end user. The problem of choosing a threshold also arises when applying the Sequential Probability Ratio Test (SPRT) in hypothesis testing \cite{AW:45}. To apply the SPRT, one must select desired probabilities of type I and type II errors. Once these probabilities are selected the SPRT provides the optimal policy, but the SPRT itself does not provide optimal values for the error probabilities.

In many decision-making scenarios, maximizing a reward rate is used to optimize  error probabilities \cite{RB-etal:06}. A reward rate criterion may provide an optimal threshold for the satisficing algorithms developed here, but the specific criterion will depend on the decision-making scenario. Natural extensions also include considering cases where the mean rewards $m_i$ are allowed to evolve over time, for example according to a jump process \cite{VS-PR-NEL:14}; such evolution will likely encourage satisficing policies that incorporate adaptive thresholds. Finally, it is well understood that satisficing is an important feature of human decision making \cite{BS-etal:02} and that the UCL algorithm can model many features of human decision decision making in bandit tasks \cite{PR-VS-NEL:14}. New empirical work should be undertaken to compare the satisficing UCL algorithms with human behavior. It is an open empirical question to determine which of our notions of regret best explains human behavior.

\section*{Acknowledgement}
We thank Simon A. Levin for helpful discussions, as well as the three anonymous referees and the editor for their feedback which greatly strengthened the paper.

\bibliographystyle{abbrv}
\bibliography{satisficing}

\begin{thebibliography}{10}

\bibitem{MA-IAS:64}
M.~Abramowitz and I.~A. Stegun, editors.
\newblock {\em Handbook of Mathematical Functions: with Formulas, Graphs, and
  Mathematical Tables}.
\newblock Dover Publications, 1964.

\bibitem{VA-PV-JW:87}
V.~Anantharam, P.~Varaiya, and J.~Walrand.
\newblock Asymptotically efficient allocation rules for the multiarmed bandit
  problem with multiple plays-part {I}: {I.I.D.} rewards.
\newblock {\em IEEE Trans. Autom. Control}, 32(11):968--976, 1987.

\bibitem{JYA-RM-CS:09}
J.-Y. Audibert, R.~Munos, and C.~Szepesv{\'a}ri.
\newblock Exploration--exploitation tradeoff using variance estimates in
  multi-armed bandits.
\newblock {\em Theoretical Comput. Sci.}, 410(19):1876--1902, 2009.

\bibitem{PA-NCB-PF:02}
P.~Auer, N.~Cesa-Bianchi, and P.~Fischer.
\newblock Finite-time analysis of the multiarmed bandit problem.
\newblock {\em Mach. Learning}, 47(2):235--256, 2002.

\bibitem{RB-etal:06}
R.~Bogacz, E.~Brown, J.~Moehlis, P.~Holmes, and J.~D. Cohen.
\newblock The physics of optimal decision making: A formal analysis of models
  of performance in two-alternative forced-choice tasks.
\newblock {\em Psychological Review}, 113(4):700--765, 2006.

\bibitem{RB-ML:00}
R.~Bordley and M.~LiCalzi.
\newblock Decision analysis using targets instead of utility functions.
\newblock {\em Dec. Economics and Finance}, 23(1):53--74, 2000.

\bibitem{Bubeck2013}
S.~Bubeck, N.~Cesa-Bianchi, and G.~Lugosi.
\newblock Bandits with heavy tail.
\newblock {\em IEEE Trans. Inf. Theory}, 59(11):7711--7717, 2013.

\bibitem{YC-YBH:05}
Y.~Carmel and Y.~Ben-Haim.
\newblock Info-gap robust-satisficing model of foraging behavior: Do foragers
  optimize or satisfice?
\newblock {\em The Amer. Naturalist}, 166(5):633--641, 2005.

\bibitem{PC-OC-SK:06}
P.~Chareka, O.~Chareka, and S.~Kennedy.
\newblock Locally sub-{G}aussian random variables and the strong law of large
  numbers.
\newblock {\em Atlantic Electron. J. Math.}, 1(1):75--81, 2006.

\bibitem{AC-etal:15}
A.~Cully, J.~Clune, D.~Tarapore, and J.-B. Mouret.
\newblock Robots that can adapt like animals.
\newblock {\em Nature}, 521(7553):503--507, 05 2015.

\bibitem{EED-SM-YM:02}
E.~Even-Dar, S.~Mannor, and Y.~Mansour.
\newblock {PAC} bounds for multi-armed bandit and {M}arkov decision processes.
\newblock In {\em Computational Learning Theory}, pages 255--270. Springer,
  2002.

\bibitem{EED-SM-YM:06}
E.~Even-Dar, S.~Mannor, and Y.~Mansour.
\newblock Action elimination and stopping conditions for the multi-armed bandit
  and reinforcement learning problems.
\newblock {\em J. Mach. Learning Research}, 7:1079--1105, 2006.

\bibitem{AG-OC:11}
A.~Garivier and O.~Capp{\'e}.
\newblock The {KL-UCB} algorithm for bounded stochastic bandits and beyond.
\newblock In {\em JMLR: Workshop and Conference Proceedings}, volume 19: COLT
  2011, pages 359--376, 2011.

\bibitem{MAG-WCS-RLF:98}
M.~Goodrich, W.~Stirling, and R.~Frost.
\newblock A theory of satisficing decisions and control.
\newblock {\em IEEE Trans. Syst., Man and Cybern. A: Syst. Humans},
  28(6):763--779, 1998.

\bibitem{SK-AT-PA-PS:12}
S.~Kalyanakrishnan, A.~Tewari, P.~Auer, and P.~Stone.
\newblock {PAC} subset selection in stochastic multi-armed bandits.
\newblock In {\em Proc. 29th Int. Conf. on Mach. Learning (ICML-12)}, pages
  655--662, 2012.

\bibitem{EK-OC-AG:12}
E.~Kaufmann, O.~Capp{\'e}, and A.~Garivier.
\newblock On {B}ayesian upper confidence bounds for bandit problems.
\newblock In {\em Int. Conf. Artificial Intelligence and Statistics}, pages
  592--600, La Palma, Canary Islands, Spain, Apr. 2012.

\bibitem{SMK:93}
S.~M. Kay.
\newblock {\em Fundamentals of Statistical Signal Processing, Volume I :
  Estimation Theory}.
\newblock Prentice Hall, 1993.

\bibitem{TK-etal:02}
T.~Keasar, E.~Rashkovich, D.~Cohen, and A.~Shmida.
\newblock Bees in two-armed bandit situations: foraging choices and possible
  decision mechanisms.
\newblock {\em Behavioral Ecology}, 13(6):757--765, 2002.

\bibitem{JRK-AK-PT:78}
J.~R. Krebs, A.~Kacelnik, and P.~Taylor.
\newblock Test of optimal sampling by foraging great tits.
\newblock {\em Nature}, 275(5675):27--31, 1978.

\bibitem{TLL-HR:85}
T.~L. Lai and H.~Robbins.
\newblock Asymptotically efficient adaptive allocation rules.
\newblock {\em Advances in Appl. Math.}, 6(1):4--22, 1985.

\bibitem{KL-QZ:11}
K.~Liu and Q.~Zhao.
\newblock Extended {UCB} policy for multi-armed bandit with light-tailed reward
  distributions.
\newblock {\em arXiv:1112.1768}, December 2011.

\bibitem{SM-JNT:04}
S.~Mannor and J.~N. Tsitsiklis.
\newblock The sample complexity of exploration in the multi-armed bandit
  problem.
\newblock {\em J. Mach. Learning Research}, 5:623--648, 2004.

\bibitem{TMM:84}
T.~M. Moe.
\newblock The new economics of organization.
\newblock {\em Amer. J. of Political Sci.}, 28(4):739--777, 1984.

\bibitem{KPM:07}
K.~P. Murphy.
\newblock Conjugate {B}ayesian analysis of the {G}aussian distribution.
\newblock https://www.cs.ubc.ca/$\sim$murphyk/Papers/bayesGauss.pdf, 2007.

\bibitem{HN:84}
H.~Nakayama and Y.~Sawaragi.
\newblock Satisficing trade-off method for multiobjective programming.
\newblock In {\em Interactive Decision Analysis}, pages 113--122. Springer,
  1984.

\bibitem{PR-NEL:14a}
P.~Reverdy and N.~E. Leonard.
\newblock Satisficing in {G}aussian bandit problems.
\newblock In {\em Proc. IEEE Conf. Decision and Control}, pages 5718--5723,
  2014.

\bibitem{PR-VS-NEL:14}
P.~Reverdy, V.~Srivastava, and N.~E. Leonard.
\newblock Modeling human decision-making in generalized {G}aussian multi-armed
  bandits.
\newblock {\em Proc. IEEE}, 102(4):544--571, 2014.

\bibitem{HR:52}
H.~Robbins.
\newblock Some aspects of the sequential design of experiments.
\newblock {\em Bulletin of the Amer. Math. Soc.}, 58:527--535, 1952.

\bibitem{BS-etal:02}
B.~Schwartz, A.~Ward, J.~Monterosso, S.~Lyubomirsky, K.~White, and D.~R.
  Lehman.
\newblock Maximizing versus satisficing: happiness is a matter of choice.
\newblock {\em J. Personality and Social Psychology}, 83(5):1178, 2002.

\bibitem{HAS:55}
H.~A. Simon.
\newblock A behavioral model of rational choice.
\newblock {\em The Quarterly J. of Econ.}, 69(1):99--118, 1955.

\bibitem{HAS:56}
H.~A. Simon.
\newblock Rational choice and the structure of the environment.
\newblock {\em Psychological Review}, 63(2):129, 1956.

\bibitem{VS-PR-NEL:13}
V.~Srivastava, P.~Reverdy, and N.~E. Leonard.
\newblock On optimal foraging and multi-armed bandits.
\newblock In {\em Proc. of the 51st Annu. Allerton Conf. on Commun., Control,
  and Computing}, pages 494--499, 2013.

\bibitem{VS-PR-NEL:14}
V.~Srivastava, P.~Reverdy, and N.~E. Leonard.
\newblock Surveillance in an abruptly changing world via multiarmed bandits.
\newblock In {\em Proc. IEEE Conf. Decision and Control}, pages 692--697, 2014.

\bibitem{LGV:84}
L.~G. Valiant.
\newblock A theory of the learnable.
\newblock {\em Commun. of the ACM}, 27(11):1134--1142, Nov. 1984.

\bibitem{AW:45}
A.~Wald.
\newblock Sequential tests of statistical hypotheses.
\newblock {\em Ann. of Math. Stat.}, 16(2):117--186, 1945.

\bibitem{DW:92}
D.~Ward.
\newblock The role of satisficing in foraging theory.
\newblock {\em Oikos}, pages 312--317, 1992.

\bibitem{SGW:00}
S.~G. Winter.
\newblock The satisficing principle in capability learning.
\newblock {\em Strategic Management Journal}, 21(10-11):981--996, 2000.

\bibitem{BY-etal:13}
B.~Yin et~al.
\newblock Finding optimal solution for satisficing non-functional requirements
  via 0-1 programming.
\newblock In {\em Proc. IEEE 37th Annu. Computer Software and Applications
  Conf.}, pages 415--424, 2013.

\end{thebibliography}

%\vspace{-5mm}
%\begin{IEEEbiography}[{\includegraphics[width=1in,height=1.25in,clip,keepaspectratio]{rever.jpg}}]
\small{\textbf{Paul Reverdy} (M '14) received the B.S. degree in engineering physics and the B.A. degree in applied mathematics from the University of California, Berkeley, Berkeley, CA, USA, in 2007 and the M.A. and Ph.D degrees in mechanical and aerospace engineering from Princeton University, Princeton, NJ, USA, in 2011 and 2014, respectively.

From 2007 to 2009, he worked as a Research Assistant at the Federal Reserve Board of Governors, Washington, DC, USA. He is currently a Postdoctoral Fellow with the Department of Electrical and Systems Engineering, University of Pennsylvania, Philadelphia, PA, USA. His research interests are in the areas of control and robotics with current interests in human and automated decision making, engineering design, and navigation.}
%\end{IEEEbiography}
%\vspace{-5mm}

%\begin{IEEEbiography}[{\includegraphics[width=1in,height=1.25in,clip,keepaspectratio]{srivas.jpg}}]
\small{\textbf{Vaibhav Srivastava} received the B.Tech. degree (2007) in mechanical engineering from the Indian Institute of Technology Bombay, Mumbai, India; the M.S. degree in mechanical engineering (2011), the M.A. degree in statistics (2012), and and the Ph.D. degree in mechanical engineering (2012) from the University of California at Santa Barbara, Santa Barbara, CA. He served as a Lecturer and Associate Research Scholar with the Mechanical and Aerospace Engineering Department, Princeton University, Princeton, NJ from 2013-2016. 

Srivastava is an Assistant Professor of Electrical and Computer Engineering at Michigan State University. His research interests include modeling and analysis of human cognition; shared autonomous systems; socio-cognitive networks; computational networks; and robotic search and surveillance problems.}
%\end{IEEEbiography}
%\vspace{-5mm}

%\begin{IEEEbiography}[{\includegraphics[width=1in,height=1.25in,clip,keepaspectratio]{leona.jpg}}]
\small{\textbf{Naomi Ehrich Leonard} (F '07)
received the B.S.E. degree in mechanical engineering from Princeton University, Princeton, NJ, in 1985 and the M.S. and Ph.D. degrees in electrical engineering from the University of Maryland, College Park, in 1991 and 1994, respectively.  From 1985 to 1989, she worked as an Engineer in the electric power industry.  

Leonard is the Edwin S. Wilsey Professor of Mechanical and Aerospace Engineering and Director of the Council on Science and Technology at Princeton University.  She is also an associated faculty member of Princeton University's Program in Applied and Computational Mathematics.  Leonard's research and teaching are in control and dynamical systems with current interests in coordinated control for multi-agent systems, mobile sensor networks, collective animal behavior, and human decision-making dynamics.}
% \end{IEEEbiography}

\newpage

\end{document}